\tikzstyle{none}=[inner sep=0pt]
\newcommand{\cb}{\mathbf{C}}
\newcommand{\db}{\mathbf{D}}
\DeclareMathOperator{\del}{\mathsf{del}}
\DeclareMathOperator{\cp}{\mathsf{cp}}
\newcommand{\baseprobn}{(\Omega^n, \bc(\Omega^n), \mu^n)}
\newcommand{\baseprob}{(\Omega, \bo, \mu)}
\newcommand{\base}{(\Omega, \bo, \mu)}
\newcommand{\bc}{\mathcal{B}}
\newcommand{\bo}{\bc(\Omega)}
\newcommand{\bmeas}{\mathbf{BorelMeas}}
\newcommand{\bstoch}{\mathbf{BorelStoch}}
\newcommand{\cbr}{\cb_{\mathcal{R}_{\mu}}}
\newcommand{\ceucmeas}{\mathbf{C}\eucmeas}
\newcommand{\cl}{\mathbf{CondLikelihood}}
\newcommand{\coklo}{\mathbf{CoKl}_{O}}
\newcommand{\copyfunctor}{Copy}
\newcommand{\dfn}{\df_{\mathcal{N}_{\mu}}}
\newcommand{\dfr}{\df_{\mathcal{R}_{\mu}}}
\newcommand{\df}{\mathbf{DF}}
\newcommand{\euc}{\mathbf{Euc}}
\newcommand{\eucmeas}{\mathbf{EucMeas}}
\newcommand{\expectation}{Exp}
\newcommand{\finstoch}{\mathbf{FinStoch}}
\newcommand{\gauss}{\mathbf{Gauss}}
\newcommand{\learn}{\mathbf{Learn}}
\newcommand{\meas}{\mathbf{Meas}}
\newcommand{\nm}{\mathcal{N}_{\mu}}
\newcommand{\obo}{(\Omega, \bc(\Omega))^{*}}
\newcommand{\parac}{\para(\cb)}
\newcommand{\parad}{\para_{\mathbf{D}}}
\newcommand{\parao}{\para_{(\Omega, \bo)^{*}}}
\newcommand{\para}{\mathbf{Para}}
\newcommand{\peuc}{\mathbf{PEuc}}
\newcommand{\push}{Push_{\mu}}
\newcommand{\realize}{R_{\omega}}
\newcommand{\rl}{\mathbb{R}}
\newcommand{\rn}{\mathcal{RN}_{\mu}}
\newcommand{\stoch}{\mathbf{Stoch}}
\newtheorem{proposition}{Proposition}
\newtheorem{definition}{Definition}[section]
\begin{document}

\title{Categorical Stochastic Processes and Likelihood}
\date{}
\author{Dan Shiebler}
\email{daniel.shiebler@kellogg.ox.ac.uk}
\homepage{danshiebler.com}
\affiliation{Department for Continuing Education and Department of Computer Science\\
University of Oxford, Oxford, United Kingdom
}
\maketitle


\begin{abstract}
We take a category-theoretic perspective on the relationship between probabilistic modeling and gradient based optimization. We define two extensions of function composition to stochastic process subordination: one based on a co-Kleisli category and one based on the parameterization of a category with a Lawvere theory. We show how these extensions relate to the category of Markov kernels $\stoch$ through a pushforward procedure.
  
We extend stochastic processes to parametric statistical models and define a way to compose the likelihood functions of these models. We demonstrate how the maximum likelihood estimation procedure defines a family of identity-on-objects functors from categories of statistical models to the category of supervised learning algorithms $\learn$. 
  
Code to accompany this paper can be found on GitHub\footnote{ \url{https://github.com/dshieble/Categorical_Stochastic_Processes_and_Likelihood}}.
\end{abstract}

\section{Introduction}
Many machine learning algorithms contain an irreducible aspect of randomness. Using category theory to reason about how this randomness breaks down into compositional and functorial structure helps us build a high-level picture of probabilistic learning and its connections to other fields. There are two kinds of uncertainty that most probabilistic reasoning aims to capture:
\begin{definition}
Epistemic uncertainty is uncertainty due to limited data or knowledge.
\end{definition}
For example, if we have a very small amount of data then we need to cope with high epistemic uncertainty. Cho et al. \cite{cho2019disintegration} and Culbertson et al. \cite{culbertson2014categorical, culbertson2013bayesian} explore how new data points affect their models' epistemic uncertainty. For example, a simple model of a complex nonlinear system is likely to have high epistemic uncertainty. 
\begin{definition}
Aleatoric uncertainty is inherent uncertainty in a system that can cause results to differ each time we run the same experiment
\end{definition}
For example, if we aim to predict the output of a system that includes a non-deterministic stage (such as a coin toss), we will need to cope with aleatoric uncertainty. Aleatoric uncertainty is common in physical systems. For example, many biological processes will produce slightly different results based on randomness in turbulent fluid flows.
For this reason, models that approximate physical systems often implicitly or explicitly produce a probability distribution over the possible outputs conditioned on some input \citep{Walker365973}.

Even models that produce point estimates, such as the ones described by Fong et al. \cite{fong2019backprop}, can be viewed as predicting the expected value of some unknown probability distribution. For example, suppose we have some system $X \rightarrow y$ that contains a degree of aleatoric uncertainty such that $P(y | X)$ is Gaussian. Now suppose we train a point estimate model that predicts $y$ from $X$ such that the mean square error between the model's predictions and the observations from the execution of this system is minimized. This is approximately equivalent to minimizing the Kullback-Leibler (KL) divergence (which measures how one probability distribution is different from a second, reference probability distribution) between a distribution with expected value given by the model's output and $P(y | X)$. In this way the structure of the model's aleatoric uncertainty is captured in its loss function (mean square error in this case). 

Now consider a physical system which has several components, each of which has some degree of aleatoric uncertainty. Suppose we want to build a compositional model for this system. If we use the neural network-like composition of Fong et al.'s \cite{fong2019backprop}, then we can only represent the full model's uncertainty with the loss function that parameterizes the backpropagation functor. As a result, we cannot characterize the interactions between the uncertainty in the different parts of the system.

For example, Eberhardt et al. \cite{eberhardt2016deep} build a convolutional neural network model to assess how the visual cortex performs a rapid stimulus categorization task. Their model includes multiple layers which represent the hierarchy within the central nervous system from photorecepters in the eye, to edge-detecting neurons in the primary visual cortex, to higher-order feature detectors in the later stages of visual cortex. Although there is aleatoric uncertainty at each layer of this biological system, Eberhardt et al. use a standard composition of neural network layers and therefore can only represent this uncertainty with a cross-entropy loss over the model's final output.

We describe an alternative strategy for constructing and composing parametric models such that we can explicitly characterize how different subsystems' uncertainties interact. We use this strategy to build a generalized framework for training neural networks that have stochastic processes as layers. To do this, we replace the domain of Fong et al.'s \cite{fong2019backprop} Backpropagation functor $\para$ with a probabilistically motivated category over which we can define the error function $er: \rl \times \rl \rightarrow \rl$ through the maximum likelihood procedure. Our specific contributions are to:
\begin{itemize}
    \item Develop a strategy for composing stochastic processes that is compatible with both subordination \citep{lalley2007levy} and parametric function composition \citep{fong2019backprop}.
    \item Introduce two categories with this compositional structure, one based on $\para$ \citep{fong2019backprop} and one based on the co-Kleisli category of the product comonad, and explore their relationships with each other and with the category $\stoch$ of Markov kernels.
    \item Extend the category of stochastic processes to a category of parametric statistical models.
    \item Demonstrate that the Radon-Nikodym derivative with respect to the Lebesgue measure acts as a semifunctor from a sub-semicategory of parametric statistical models into a semicategory of likelihood functions.
    \item Define a family of subcategories of parametric statistical models over which we can use the maximum likelihood procedure to define a backpropagation functor into the category $\learn$ of learning algorithms \citep{fong2019backprop}.
\end{itemize}

\section{Preliminaries}


\subsection{Probability}

\subsubsection{Probability Measures, Random Variables and Markov Kernels}\label{markovkernels}

\begin{definition}
A $\sigma$-algebra $\Sigma$ on the set $\Omega$ is a set of subsets of $\Omega$ that includes $\Omega$ and is closed under (1) countable union (2) countable intersection (3) complement in $\Omega$.
\end{definition}

\begin{definition}\label{definition:measurable-space}
A measurable space is a pair $(\Omega, \Sigma)$ of a set $\Omega$ and a $\sigma$-algebra $\Sigma$ on $\Omega$.
\end{definition}
There is a natural notion of a morphism that we can define between measurable spaces.
\begin{definition}\label{definition:measurable-function}
A function $f: A \rightarrow B$ between the measurable spaces $(A, \Sigma_A)$ and $(B, \Sigma_B)$ is measurable if for any $\sigma_B \in \Sigma_B$, $f^{-1}(\sigma_B) \in \Sigma_A$.
\end{definition}
Given two measurable spaces we can take their product in a canonical way.
\begin{definition}\label{definition:product-measurable-space}
Given measurable spaces $(\Omega, \Sigma_{\Omega})$, $(\Omega', \Sigma_{\Omega'} )$ we can form a $\sigma$-algebra $\Sigma_{\Omega} \times \Sigma_{\Omega'}$ on the set $\Omega \times \Omega'$ by taking all countable unions and complements of subsets in $\{\sigma_{\Omega} \times \sigma_{\Omega'} \ |\  \sigma_{\Omega} \in \Sigma_{\Omega} , \sigma_{\Omega'} \in \Sigma_{\Omega'}\}$. We call $\Sigma_{\Omega} \times \Sigma_{\Omega'}$ the product $\sigma$-algebra of $\Sigma_{\Omega}$ and $\Sigma_{\Omega'}$ and we call the measurable space $(\Omega \times \Omega', \Sigma_{\Omega} \times \Sigma_{\Omega'})$ the product measurable space of $(\Omega, \Sigma_{\Omega}), (\Omega', \Sigma_{\Omega'})$.
\end{definition}

We can also form measurable spaces from topological spaces.
\begin{definition}
Given the topological space $\Omega$ the Borel $\sigma$-algebra $\bc(\Omega)$ of $\Omega$ is the $\sigma$-algebra generated by the collection of open subsets of $\Omega$. We call the measurable space $(\Omega, \bc(\Omega))$ a Borel measurable space. 
\end{definition}
%
%
%
Said another way, the Borel $\sigma$-algebra of $\Omega$ is the smallest $\sigma$-algebra of $\Omega$ that contains all open sets in $\Omega$. The Borel measurable space associated with a Polish space (e.g. $\rl$) has special properties.
%
%
\begin{definition}\label{definition:standard-borel}
A standard Borel space is a Borel measurable space associated with a Polish space. Standard Borel spaces are closed under countable products.
\end{definition}
%
%
Examples of standard Borel spaces include $(\rl, \bc(\rl))$ and its countable products $(\rl^n, \bc(\rl^n))$. 

The fundamental objects in measure-theoretic probability are the probability measure and probability space:
\begin{definition}
A probability space is a triplet $(\Omega, \Sigma, \mu)$ where $(\Omega, \Sigma)$ is a measurable space (Definition \ref{definition:measurable-space}) and $\mu$ is a probability measure over $(\Omega, \Sigma)$. That is, $\mu$ is a countably additive function over the $\sigma$-algebra $\Sigma$ that returns results in the unit interval $[0,1]$ such that $\mu(\Omega) = 1, \mu(\varnothing) = 0$. Recall that $\Sigma$ is a set of subsets of $\Omega$. 
\end{definition}
%
Probability spaces are closed under products. This is, when $(\Omega, \Sigma, \mu)$ and $(\Omega', \Sigma', \mu')$ are probability spaces the product space $(\Omega \times \Omega', \Sigma \times \Sigma', \mu\mu')$ where $\mu\mu'(\omega) = \mu(\omega)\mu'(\omega)$ is also a probability space.

In practice, we will generally work with parameterized probability measures, which we call Markov kernels.
\begin{definition}
A Markov kernel between the measurable space $(A, \Sigma_A)$ and the measurable space $(B, \Sigma_B)$ is a function $\mu:  A \times \Sigma_B \rightarrow [0,1]$ such that:
\begin{itemize}
    \item For all $\sigma_b \in \Sigma_B$, the function $\mu(\_, \sigma_b): A \rightarrow [0,1]$ is measurable.
    \item For all $x_a \in A$, $\mu(x_a, \_): \Sigma_B \rightarrow [0,1]$ is a probability measure on $(B, \Sigma_B)$. In particular:
    \begin{gather*}
        \mu(x_a, B) = 1
        \qquad
        \mu(x_a, \varnothing) = 0.
    \end{gather*}
\end{itemize}
\end{definition}
For example, a Markov Kernel between the one-point set and the measurable space $(A, \Sigma_A)$ is just a probability measure over $(A, \Sigma_A)$.

Another foundational object in measure-theoretic probability is the random variable, which is paradoxically neither random nor a variable.
\begin{definition}
A random variable defined on the probability space $(\Omega, \Sigma, \mu)$ is a measurable function from $(\Omega, \Sigma)$ to $(\rl, \bc(\rl))$. 
\end{definition}
We will sometimes use the term ``random variable'' to refer to measurable functions into $(\rl^n, \bc(\rl^n))$ as well. These are also called multivariate random variables or random vectors. While some authors use uppercase letters like $X$ to denote random variables, we will use lowercase letters like $f,g$ to emphasize that random variables are functions. 

Random variables and probability measures are closely related:
\begin{definition}
Given a probability space $\baseprob$ and a random variable $f: \Omega \rightarrow \rl$, the pushforward $f_{*}\mu$ of $\mu$ along $f$ is a probability measure over $(\rl, \bc(\rl))$ defined to be:
\begin{gather*}
    f_{*}\mu: \bc(\rl) \rightarrow [0,1]
    \\
    f_{*}\mu(\sigma_{\rl}) = \mu(f^{-1}(\sigma_{\rl})).
\end{gather*}
\end{definition}

Like probability measures, random variables have a parameterized extension.
\begin{definition}
A stochastic process defined in the probability space $(\Omega, \Sigma, \mu)$ is a family of random variables indexed by some set $T$.
\end{definition}
That is, we can write a stochastic process as a function $f: \Omega \times T \rightarrow \rl$. We limit our study to stochastic processes that are jointly Borel-measurable. We can define the pushforward of $\mu$ along such a stochastic process $f$ to be the following Markov Kernel:
\begin{gather*}
    f_{*}\mu: T \times \bc(\rl) \rightarrow [0,1]
    \\
    f_{*}\mu(x_t, \sigma_{\rl}) =
    f(\_, x_t)_{*}\mu(\sigma_{\rl}) =
    \mu(f(\_, x_t)^{-1}(\sigma_{\rl})).
\end{gather*}

\subsubsection{Categories in Probability}

Measurable spaces and measurable functions form a symmetric monoidal category as follows:
\begin{definition}\label{definition:meas}
The objects in $\meas$ are pairs $(A, \Sigma_A)$, where $\Sigma_A$ is a $\sigma$-algebra over $A$, and morphisms are measurable functions.
%
The tensor product of the measurable spaces $(A, \Sigma_A)$ and $(B, \Sigma_B)$ in $\meas$ is the product measurable space $(A \times B, \Sigma_A \times \Sigma_B)$ (Definition \ref{definition:product-measurable-space}) and the tensor product of the measurable functions $g,f$ is the function $(g\otimes f)(x,y) = (g(x),f(y))$.
\end{definition}

A notable subcategory of $\meas$ is the following:
\begin{definition}\label{definition:bmeas}
The category $\bmeas$ is the symmetric monoidal subcategory of $\meas$ in which objects are limited to standard Borel measurable spaces.
\end{definition}
Since standard Borel spaces are closed under countable products this subcategory is symmetric monoidal.

\begin{proposition}\label{proposition:eucmeas}
%
%
We can form a strict symmetric monoidal subcategory $\eucmeas$ of $\bmeas$ in which objects are restricted to be $(\rl^n, \bc(\rl^n))$ for some $n \in \mathbb{N}$, the tensor of the objects $(\rl^n, \bc(\rl^n))$ and $(\rl^m, \bc(\rl^m))$ is $(\rl^{n+m}, \bc(\rl^{n+m}))$ and morphisms are restricted to be infinitely differentiable.
\end{proposition}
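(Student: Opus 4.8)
The plan is to verify the defining axioms in turn: that $\eucmeas$ is genuinely a subcategory of $\bmeas$, that the stated tensor makes it monoidal compatibly with $\bmeas$, that this structure is strict, and that the symmetry of $\bmeas$ restricts to it.

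First I would check the subcategory conditions. Each object $(\rl^n, \bc(\rl^n))$ is standard Borel, since $\rl^n$ is Polish, so it lies in $\bmeas$ (Definitions \ref{definition:standard-borel} and \ref{definition:bmeas}). Every infinitely differentiable $f \colon \rl^n \to \rl^m$ is continuous, hence Borel measurable, so it is a morphism of $\bmeas$. Closure is then immediate: identities are smooth, and composites of smooth maps are smooth by the chain rule, so $\eucmeas$ is closed under identities and composition.

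Next I would set up the monoidal structure. On objects the stated tensor must agree with that of $\bmeas$ (Definition \ref{definition:meas}), which amounts to the identification $(\rl^n \times \rl^m, \bc(\rl^n) \times \bc(\rl^m)) = (\rl^{n+m}, \bc(\rl^{n+m}))$. The underlying sets coincide canonically, so the real content is the equality of $\sigma$-algebras $\bc(\rl^n) \times \bc(\rl^m) = \bc(\rl^{n+m})$. On morphisms, for smooth $f, g$ the map $(f \otimes g)(x,y) = (f(x), g(y))$ has each output coordinate a smooth function of the inputs, hence is smooth, so the tensor keeps us inside $\eucmeas$. Identifying each object with its dimension reduces the tensor to addition in $\mathbb{N}$, with unit the one-point space $(\rl^0, \bc(\rl^0))$; since addition is strictly associative and unital, the associator and unitors are literal identities, giving strictness with trivially commuting coherence diagrams. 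For the symmetry, the swap $\rl^{n+m} \to \rl^{m+n}$ exchanging the two coordinate blocks is linear, hence smooth, so the braiding of $\bmeas$ restricts, and the hexagon and involutivity axioms are inherited.

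The main obstacle is the $\sigma$-algebra equality $\bc(\rl^n) \times \bc(\rl^m) = \bc(\rl^{n+m})$. The inclusion $\subseteq$ is routine since the coordinate projections are continuous, but the reverse inclusion can fail for general topological factors; here it holds because $\rl^n$ is second countable, so every open subset of $\rl^{n+m}$ is a countable union of open boxes and therefore lies in the product $\sigma$-algebra. This second-countability argument is the one genuinely measure-theoretic point; the remainder reduces to elementary facts about smooth maps and the strict associativity of addition.
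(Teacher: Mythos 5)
Your proof is correct and follows the same overall skeleton as the paper's: check that identities are smooth, that smooth maps are closed under composition and tensor, and that strictness reduces to the strict associativity and unitality of addition of dimensions, with $(\rl^0,\bc(\rl^0))$ as unit. The differences lie in two places where you supply detail the paper elides. First, where the paper simply invokes the packaged fact that standard Borel spaces are closed under products --- so that the tensor of $(\rl^n,\bc(\rl^n))$ and $(\rl^m,\bc(\rl^m))$ in $\meas$ is already $(\rl^{n+m},\bc(\rl^{n+m}))$ --- you prove the underlying $\sigma$-algebra identity $\bc(\rl^n)\times\bc(\rl^m)=\bc(\rl^{n+m})$ directly, correctly isolating second countability as the reason the nontrivial inclusion holds (it can indeed fail for non-second-countable factors, so flagging this as the one genuinely measure-theoretic point is apt). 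Second, you verify that the symmetry of $\bmeas$ restricts to $\eucmeas$: the block swap $\rl^{n+m}\rightarrow\rl^{m+n}$ is linear, hence smooth and measurable, and the hexagon and involutivity axioms are inherited. The paper's proof never checks this, even though the proposition asserts a \emph{symmetric} monoidal subcategory --- its proof addresses only identities, composition, tensor closure, associators and unitors --- so on this point your argument is strictly more complete. What the paper's terser route buys is brevity, leaning on the closure property stated with Definition~\ref{definition:standard-borel}; what yours buys is self-containedness and a repaired gap around the braiding.
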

\begin{proof}
We first need to show that $\eucmeas$ is a symmetric monoidal subcategory of $\meas$. $\eucmeas$ contains all identities since the identity function $f(x_n) = x_n$ is infinitely differentiable for all $n$.
Next, given two infinitely differentiable measurable maps $g,f$ the composition $g \circ f$ and tensor $(g \otimes f)(x,y) = (g(x),f(y))$ are infinitely differentiable and measurable as well. Therefore morphisms in $\eucmeas$ are closed under composition and tensor.
Next, since the tensor product of the standard Borel spaces $(\rl^n, \bc(\rl^n))$ and $(\rl^m, \bc(\rl^m))$ in $\meas$ is the standard Borel space $(\rl^{n+m}, \bc(\rl^{n+m}))$ we have that objects in $\eucmeas$ are closed under tensor as well.  

Next, in order to show that $\eucmeas$ is strict monoidal we need to show that the associators and unitors in $\eucmeas$ are identities.  First, since:
\begin{gather*}
    (\rl^{n+(m+k)}, \bc(\rl^{n+(m+k)})) = (\rl^{(n+m)+k}, \bc(\rl^{(n+m)+k}))
\end{gather*}
the associators in $\eucmeas$ are identities. Next, since:
\begin{gather*}
    (\rl^{0+n}, \bc(\rl^{0+n})) = 
    (\rl^{n+0}, \bc(\rl^{n+0})) = 
    (\rl^{n}, \bc(\rl^{n}))
\end{gather*}
the unitors in $\eucmeas$ are identities.

\end{proof}

Another important category that we will consider is $\stoch$ \citep{lawvereprob, giry1982categorical}.
\begin{definition}\label{definition:stoch}
%
In the symmetric monoidal category $\stoch$ objects are measurable spaces and morphisms are Markov kernels. We define the composition of the Markov kernels $\mu:  A \times \Sigma_B \rightarrow [0,1]$ and $\mu': B \times \Sigma_C \rightarrow [0,1]$ to be the following, where $x_a \in A$ and $\sigma_c \in \Sigma_C$:
\begin{gather*}
    (\mu' \circ \mu): A \times \Sigma_C \rightarrow [0,1]
    \\
    (\mu' \circ \mu)(x_a, \sigma_c) = 
    \int_{x_b \in B} \mu'(x_b,\sigma_c) d\mu(x_a, \_).
\end{gather*}
The identity morphism at $(A, \Sigma_A)$ is $\delta$ where for $x_a \in A, \sigma_a \in \Sigma_A$:
\begin{gather*}
    \delta: A \times \Sigma_A \rightarrow [0,1]
    \\
    \delta(x_a, \sigma_a) = \begin{cases} 
      1 & {x_a} \in \sigma_a \\
      0 & {x_a} \not\in \sigma_a \\
    \end{cases}.
\end{gather*}
The tensor product of the Markov Kernels $\mu: A \times \Sigma_B \rightarrow [0,1]$ and $\mu':  C \times \Sigma_D \rightarrow [0,1]$ in $\stoch$ is the Markov Kernel:
\begin{gather*}
    (\mu' \otimes \mu): (A \times C) \times (\Sigma_B \times \Sigma_D) \rightarrow [0,1]
\end{gather*}
where $\Sigma_B \times \Sigma_D$ is the product sigma-algebra and for $x_a \in A, x_c \in C, \sigma_{b} \in \Sigma_B, \sigma_{d} \in \Sigma_D$:
\begin{gather*}
    (\mu' \otimes \mu)((x_a, x_c), \sigma_{b}\times \sigma_{d}) = \mu(x_a, \sigma_b)\mu(x_c, \sigma_d).
\end{gather*}
\end{definition}


The objects in $\stoch$ are also equipped with a commutative comonoidal structure that is compatible with the monoidal product in $\stoch$. That is, each metric space $X \in \stoch$ is equipped with a comultiplication map $\cp: X \rightarrow X \otimes X$ and a counit map $\del: X \rightarrow 1$ that satisfy the commutative comonoid equations, naturality of $\del$ and:
\begin{gather*}
    \cp_{X \otimes Y} = (id_X \otimes \sigma_{Y,X} \otimes id_Y)(\cp_{X}\otimes \cp_{Y}),
\end{gather*}
where $\sigma_{Y,X}: X \times Y \rightarrow Y \times X$ is the symmetric monoidal swap map in $\stoch$.

$\stoch$ has many notable subcategories. For example, if we limit to countably generated measurable spaces
as objects and Markov kernels over perfect probability measures as morphisms we get the following category:
\begin{definition}\label{definition:cgstoch}
The category $\mathbf{CGStoch}$ is the subcategory of $\stoch$ in which objects are limited to countably generated measurable spaces.
\end{definition}
If we add an additional condition of separability we form the following:
\begin{definition}\label{definition:bstoch}
The category $\bstoch$ is the subcategory of $\stoch$ in which objects are limited to standard Borel spaces (the Borel spaces associated with Polish spaces).
\end{definition}
Limiting further to finite spaces yields the following category:
\begin{definition}\label{definition:finstoch}
The category $\finstoch$ is the subcategory of $\stoch$ in which objects are limited to finite measurable spaces.
\end{definition}

\subsection{Parameterized Morphisms}

A categorical tool that has risen in prominence to represent parameters in machine learning models is the $\para$ operator. This operator has several presentations \citep{capucci2021foundations, fong2019backprop, gavranovic2019compositional, cruttwell2021categorical}. A simple definition is as follows:

\begin{definition}\label{definition:para}
%
Let $\cb$ be a strict symmetric monoidal category. Then $\parac$ is a category with the same objects as $\cb$. A morphism $A \rightarrow B$ in $\parac$ is a pair $(P, f)$ where $P$ is an object of $\cb$ and:
\begin{gather*}
f : P \otimes A \rightarrow B
\end{gather*}
is a morphism in $\cb$.
%
The composition of morphisms:
\begin{align*}
    f: P \otimes A \rightarrow B
    \qquad
    g: Q \otimes B \rightarrow C
\end{align*}
is given by $(Q \otimes P, g \circ (id_{Q} \otimes f))$:
\begin{align*}
    & (Q \otimes P) \otimes A
    \xrightarrow{=} Q \otimes (P \otimes A) 
    \xrightarrow{id_{Q} \otimes f} Q \otimes B
    \xrightarrow{g} C
\end{align*}
\end{definition}

\section{Random Variables and Independence}
\subsection{Random Variables and Independence in $\bstoch$}

In any categorical presentation of probability, a natural question is how to reason about the notion of independence of random variables \citep{gerhold2016categorial, franz2002stochastic, fritz2020synthetic}.

Since $\bstoch$ (Definition \ref{definition:bstoch}) is the Kleisli category of the restriction of the Giry monad  \citep{giry1982categorical} over $\bmeas$ (Definition \ref{definition:bmeas}), we can define an embedding functor from $\bmeas$ into $\bstoch$ that acts as an identity on objects and sends the measurable function $f: (A, \Sigma_A) \rightarrow (B, \Sigma_B)$ to the Dirac Markov kernel $\delta_{f}: A \times \Sigma_B \rightarrow [0,1]$ where for $x_a \in A, \sigma_b \in  \Sigma_B$:
\begin{gather*}
    \delta_{f}(x_a, \sigma_b) = \begin{cases} 
      1 & f(x_a)\in \sigma_b \\
      0 & f(x_a)\not\in \sigma_b \\
    \end{cases}
\end{gather*}
This formalizes the intuition that Markov kernels are a generalization of both measurable functions and probability measures, and provides an avenue to directly study random variables and their independence in $\bstoch$. 

Now suppose we have a probability space $(\Omega, \Sigma, \mu)$ such that $(\Omega, \Sigma)$ is standard Borel, and two real-valued random variables defined on this space $f, f'$. We can think of these random variables as morphisms in $\bmeas$ from $(\Omega, \Sigma)$ to $(\rl, \bc(\rl))$. We can represent this probability space as a morphism in $\bstoch$ between the monoidal unit $*$ and $(\Omega, \Sigma)$: that is, a Markov kernel $\mu: * \times \Sigma \rightarrow [0,1]$. Going forward we will write the type signature $* \times \Sigma \rightarrow [0,1]$ as $\Sigma \rightarrow [0,1]$ for convenience.

We can then represent $f$ and $f'$ with their embeddings into $\bstoch$: the Dirac Markov kernels $\delta_{f},\delta_{f'}$. If we compose $\delta_f$ and $\mu$ in $\bstoch$, we form a new probability measure $(\delta_f \circ \mu): \bc(\rl) \rightarrow [0,1]$, which is the pushforward measure $f_{*}\mu$ of $\mu$ along $f$. 

We now have a hint of how we can reason about the independence or dependence of random variables in $\bstoch$. First, consider the probability measure:
\begin{gather*}
    (\delta_{f} \circ \mu) \otimes (\delta_{f'} \circ \mu): \bc(\rl \times \rl) \rightarrow [0,1]
\end{gather*}
where for $\sigma \times \sigma' \in \bc(\rl \times \rl)$:
%
%
\begin{align*}
    \left[(\delta_{f} \circ \mu) \otimes (\delta_{f'} \circ \mu)\right](\sigma \times \sigma') = \\
    \left[\int_{\omega \in \Omega} \delta_{f}(\omega,\sigma) d\mu \right]
    \left[\int_{\omega \in \Omega} \delta_{f'}(\omega,\sigma') d\mu \right] = \\
    f_{*}\mu(\sigma)f'_{*}\mu(\sigma').
\end{align*}
This is simply the product measure over $(\rl \times \rl, \bc(\rl \times \rl))$ of the probability measures $(\delta_{f} \circ \mu)$ and $(\delta_{f'} \circ \mu)$ over $(\rl, \bc(\rl))$. It is completely determined by the marginal distributions of $f$ and $f'$ over the probability space $(\Omega, \Sigma, \mu)$, and it is agnostic to the independence or dependence structure of $f$ and $f'$. The reason for this is that the measure $\mu$ is essentially ``duplicated'', and the random variables $f$ and $f'$ are not actually compared over the same probability space.

In contrast, consider instead the probability measure:
\begin{gather*}
    (\delta_{f} \otimes \delta_{f'}) \circ \cp \circ \mu: \bc(\rl \times \rl) \rightarrow [0,1]
\end{gather*}
where $\cp: \Omega \rightarrow \Omega \otimes \Omega$ is the comonoidal copy map at $\Omega$ in $\bstoch$  \citep{fritz2020synthetic}. We can see that for $\sigma \times \sigma' \in \bc(\rl \times \rl)$:
\begin{gather*}
    \left[(\delta_{f} \otimes \delta_{f'}) \circ \cp \circ\mu\right](\sigma \times \sigma') = 
    \left[\int_{\omega \in \Omega} \delta_{f}(\omega,\sigma)\delta_{f'}(\omega,\sigma') d\mu \right].
\end{gather*}
This is the probability measure over $(\rl \times \rl, \bc(\rl \times \rl))$ associated with the joint distribution of the random variables $f$ and $f'$ over $(\Omega, \Sigma, \mu)$.

Therefore, the random variables $f$ and $f'$ are independent over the probability space $(\Omega, \Sigma, \mu)$ if and only if the probability measures $(\delta_{f} \circ \mu) \otimes (\delta_{f'} \circ \mu)$ and $(\delta_f \otimes \delta_f') \circ \cp \circ \mu$ are equal.

\section{The co-Kleisli Construction}

Fong et al. and Cruttwell et al. \cite{fong2019backprop, cruttwell2021categorical} build their characterization of machine learning optimization problems on top of the category of Euclidean spaces and parameterized infinitely differentiable maps between them. Rather than represent the loss function itself categorically, the authors treat it as an externally-provided hyperparameter.

However, in practice the loss function is usually implied by the problem. A common problem statement is as follows: given some parameterized random variable, derive the parameters that maximize the likelihood of some observed data being drawn from the distribution of this random variable. A natural question is therefore whether it is possible to replace the parameterized infinitely differentiable maps in Fong et al.'s \cite{fong2019backprop} construction with parameterized random variables.

A quick note on the category of Euclidean spaces and parameterized infinitely differentiable maps between them: Fong et al. \cite{fong2019backprop} calls this category $\para$ whereas Cruttwell et al. \cite{cruttwell2021categorical} call it $\para(\euc)$ (Definition \ref{definition:para}). We will use Cruttwell et al.'s \cite{cruttwell2021categorical} notation, but we will work with the category $\eucmeas$ (Proposition \ref{proposition:eucmeas}) instead of $\euc$ to make it easier to talk about probabilistic constructions. 

Before moving to $\para(\eucmeas)$, we will start with the category $\eucmeas$ (Proposition \ref{proposition:eucmeas}) of Euclidean spaces and infinitely differentiable maps between them. Our first step will be to replace the morphisms in $\eucmeas$ with stochastic processes, or indexed families of random variables.

To start, note that $(O \times \_): \cb \rightarrow \cb$ is an endofunctor that maps the object $A \in Ob(\cb)$ to to $O \times A$ and maps the morphism $f: A \rightarrow B$ to the morphism $id_{O} \otimes f: O \times A \rightarrow O\times B$. We can now introduce the following definition:

\begin{definition}\label{definition:coklo}
%
%
%
%
%
For some Cartesian monoidal category $\cb$ and object $O$ in $\cb$, $\coklo(\cb)$ is the co-Kleisli category of $\cb$ under the product comonad $(O \times \_)$.
\end{definition}

The category $\mathbf{CoKl}_{O}(\cb)$ has the same objects as $\cb$ and the morphisms in $\mathbf{CoKl}_{O}(\cb)[A,B]$ are morphisms in $\cb[O\times A, B]$. The identity map at the object $A$ is:
\begin{gather*}
    (\del_{O} \otimes id_{A}): O \times A \rightarrow A
\end{gather*}
where $\del_{O}: O \rightarrow *$ is the unique map from $O$ to the terminal object $*$. The $\mathbf{CoKl}_{O}(\cb)$-composition of the morphisms $f: O \times A \rightarrow B$ and $g: O \times B \rightarrow C$ is:
\begin{gather*}
    (g \circ_{\mathbf{CoKl}_{O}(\cb)} f): O \times A \rightarrow C
    \\
    g \circ_{\mathbf{CoKl}_{O}(\cb)} f = 
    g \circ_{\cb}
    (id_{O} \otimes f) \circ_{\cb}
    (\cp_{O} \otimes id_{A})
\end{gather*}
%
where $\cp_{O}: O \rightarrow O \times O$ is the copy map (aka diagonal) in $\cb$.

For example, if $\Omega$ is $\rl^n$ for some $n \in \mathbb{N}$, the category $\mathbf{CoKl}_{(\Omega, \bo)}(\eucmeas)$ (which we will hereafter abbreviate $\ceucmeas$, see Table \ref{legend}) has the same objects as $\eucmeas$, and the morphisms between $\rl^a$ and $\rl^b$ are continuously differentiable 
(and therefore Borel-measurable) functions of the form $f: \Omega \times \rl^a \rightarrow \rl^b$.

In $\ceucmeas$, the identity arrow at $\rl^a$ is the function $f(\omega, x_a) = x_a$ and the composition of $f: \Omega \times \rl^a \rightarrow \rl^b$ and $f': \Omega \times \rl^b \rightarrow \rl^c$ is $(f' \circ f): \Omega \times \rl^a \rightarrow \rl^c$ where for $\omega \in \Omega, x_a \in \rl^a$:
\begin{gather*}
    (f' \circ f)(\omega, x_a) = f'(\omega, f(\omega, x_a)).
\end{gather*}
%
%
One important thing to note is that $\omega$ is reused when we compose $f$ and $f'$. This allows us to make the following claim:
\begin{proposition}
For any $\omega \in \Omega$, the identity-on-objects map that sends the function $f: \Omega \times \rl^a \rightarrow \rl^b$ in $\ceucmeas$ to the function $f(\omega, \_): \rl^a \rightarrow \rl^b$ in $\eucmeas$ is a functor $\realize: \ceucmeas \rightarrow \eucmeas$, which we call the realization functor.
\end{proposition}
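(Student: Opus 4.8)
The plan is to verify the three functor axioms for $\realize$ directly: that it is well defined (its image genuinely lands in $\eucmeas$), that it preserves identities, and that it preserves composition. Well-definedness and identity preservation will be immediate, so the real content sits in composition preservation, which is exactly where the reuse of $\omega$ in the co-Kleisli composition does all the work.

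First I would settle well-definedness. A morphism $f: \Omega \times \rl^a \rightarrow \rl^b$ of $\ceucmeas$ is by definition a morphism $\Omega \times \rl^a \rightarrow \rl^b$ of $\eucmeas$, hence infinitely differentiable. Fixing the first argument to a constant $\omega \in \Omega$ amounts to precomposition with the smooth inclusion $x_a \mapsto (\omega, x_a)$, so $f(\omega, \_): \rl^a \rightarrow \rl^b$ is again infinitely differentiable and thus a legitimate morphism of $\eucmeas$; since the two categories share the same objects (Proposition \ref{proposition:eucmeas}) and $\realize$ is declared identity-on-objects, the object assignment needs no further checking. For identities, the $\ceucmeas$-identity at $\rl^a$ is $(\omega, x_a) \mapsto x_a$ (namely $\del_{\Omega} \otimes id_{\rl^a}$), and evaluating it at a fixed $\omega$ returns $x_a \mapsto x_a$, i.e.\ $id_{\rl^a}$ in $\eucmeas$, so identities are preserved.

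The main step, and the only one with any subtlety, is composition. For $f: \Omega \times \rl^a \rightarrow \rl^b$ and $f': \Omega \times \rl^b \rightarrow \rl^c$ the excerpt already records that the $\ceucmeas$-composite is $(\omega, x_a) \mapsto f'(\omega, f(\omega, x_a))$, so $\realize(f' \circ f)$ is the map $x_a \mapsto f'(\omega, f(\omega, x_a))$; on the other hand $\realize(f') \circ \realize(f)$ is the ordinary $\eucmeas$-composite $x_a \mapsto f'\bigl(\omega, f(\omega, x_a)\bigr)$, with the \emph{same} constant $\omega$ fed into both factors, and the two expressions visibly agree. The point I would stress is that this equality hinges entirely on the copy map $\cp_{\Omega}$ in the co-Kleisli composition duplicating one and the same $\omega$ into $f$ and $f'$; had the composition instead threaded two independent copies of $\Omega$, fixing a single $\omega$ would not align the two sides and functoriality would fail. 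With this observation the three axioms are established and $\realize$ is a functor. I do not anticipate a genuine obstacle here, only the bookkeeping of confirming that partial evaluation preserves smoothness and that the shared-$\omega$ structure makes the composition square commute on the nose.
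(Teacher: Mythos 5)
Your proposal is correct and follows essentially the same route as the paper's proof: a direct verification that $\realize$ preserves identities and composition, with the key computation $\realize(f' \circ f)(x_a) = f'(\omega, f(\omega, x_a)) = (\realize f' \circ \realize f)(x_a)$ appearing verbatim in the paper. Your additional check that partial evaluation $f \mapsto f(\omega, \_)$ preserves infinite differentiability (via precomposition with the smooth inclusion $x_a \mapsto (\omega, x_a)$) is a well-definedness step the paper leaves implicit, and your remark that functoriality hinges on the copy map reusing a single $\omega$ correctly identifies the mechanism the paper itself highlights just before the proposition.
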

\begin{proof}
First, if $f$ is the identity map in $\ceucmeas$ then $f(\omega, \_)$ is by definition the identity function. Next, consider $f: \Omega \times \rl^a \rightarrow \rl^b, f': \Omega \times \rl^b \rightarrow \rl^c$ in $\ceucmeas$ and any $x_a \in \rl^a$. Then:
\begin{gather*}
(\realize f' \circ \realize f): \rl^a \rightarrow \rl^c
\\
(\realize f' \circ \realize f)(x_a) =
(f'(\omega, \_) \circ f(\omega, \_))(x_a) =
f'(\omega, f(\omega, x_a)) =
\realize(f' \circ f)(x_a)
\end{gather*}
so composition is preserved.
%
\end{proof}
Given a probability measure $\mu: \bo \rightarrow [0,1]$, we can think of $\ceucmeas$ as a category of differentiable stochastic processes defined on the probability space $\baseprob$. One particularly important kind of stochastic process is a Levy Process.
\begin{definition}
A Levy Process is a one-dimensional stochastic process $f: \Omega \times \rl \rightarrow \rl$ defined on the probability space $\baseprob$ such that:
\begin{itemize}
    \item $f(\_, 0) = 0$ almost surely.
    \item For $t_d > t_c > t_b > t_a \in \rl$, the random variables $f(\_, t_b) - f(\_, t_a)$ and $f(\_, t_d) - f(\_, t_c)$ are independent.
    \item For $t_b > t_a \in \rl$, the random variables $f(\_, t_b) - f(\_, t_a)$ and $f(\_, t_{b}-t_{a})$ have the same distribution.
    \item For any $\omega \in \Omega$ the function $f(\omega, \_)$ is continuous.
\end{itemize}
\end{definition}
We can view Levy processes as continuous-time generalizations of random walks, or as Brownian motions with drift.
\begin{definition}
A subordinator is a non-decreasing Levy Process. That is, if $f$ is a subordinator then for any fixed $\omega \in \Omega$ the function $f(\omega, \_)$ is non-decreasing.
\end{definition}
Since subordinators are closed under composition we can show the following:
\begin{proposition}
Continuously differentiable subordinators form a single-object  subcategory of $\ceucmeas$ at $(\rl, \bc(\rl))$.
\end{proposition}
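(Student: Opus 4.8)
The plan is to verify the two conditions that make a class of endomorphisms of $(\rl, \bc(\rl))$ a subcategory of $\ceucmeas$: that the $\ceucmeas$-identity at $(\rl, \bc(\rl))$ is a continuously differentiable subordinator, and that the $\ceucmeas$-composite of two continuously differentiable subordinators is again one. Everything else (associativity, unit laws, and the fact that composites remain continuously differentiable morphisms $\Omega \times \rl \to \rl$) is inherited from $\ceucmeas$ itself, so only the defining properties of a subordinator need checking.

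First I would dispose of the identity. The $\ceucmeas$-identity at $\rl$ is $e(\omega, t) = t$, which is infinitely differentiable and, for each fixed $\omega$, continuous and non-decreasing in $t$. The Levy axioms are immediate: $e(\_, 0) = 0$ identically; every increment $e(\_, t_b) - e(\_, t_a) = t_b - t_a$ is a deterministic constant, so disjoint increments are trivially independent, and $e(\_, t_b) - e(\_, t_a)$ coincides with $e(\_, t_b - t_a)$, giving stationarity. Hence $e$ is a continuously differentiable subordinator.

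Next I would treat composition. Given continuously differentiable subordinators $f, f' : \Omega \times \rl \to \rl$, their $\ceucmeas$-composite is $g(\omega, t) = f'(\omega, f(\omega, t))$, which is already a continuously differentiable morphism by the co-Kleisli construction. The pathwise axioms are routine: for fixed $\omega$, $g(\omega, \_)$ is a composite of continuous non-decreasing maps, hence continuous and non-decreasing, and $g(\omega, 0) = f'(\omega, f(\omega, 0)) = f'(\omega, 0) = 0$ almost surely, applying condition (1) first to $f$ and then to $f'$. The substantive step is the pair of distributional axioms --- independence and stationarity of the increments of $g$ --- for which I would invoke the classical theory of Bochner subordination: since $f$ is non-decreasing it acts as a random time change, so each increment $g(\_, t_b) - g(\_, t_a) = f'(\_, f(\_, t_b)) - f'(\_, f(\_, t_a))$ is an increment of $f'$ over the random interval $[f(\_, t_a), f(\_, t_b)]$, and conditioning on $f$ turns disjoint time-increments of $g$ into increments of $f'$ over disjoint deterministic intervals, which are independent and stationary by the Levy property of $f'$; combined with the independent stationary increments of $f$ this transfers both properties to $g$. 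The genuine obstacle I expect is that the co-Kleisli composite deliberately reuses the same $\omega$ in $f$ and $f'$, whereas Bochner's theorem requires the time change $f$ and the subordinated process $f'$ to be independent; the crux is therefore to establish (or to restrict the subordinator class so as to guarantee) that $f$ and $f'$ are independent over $\baseprob$ in the sense introduced earlier, $(\delta_{f} \circ \mu) \otimes (\delta_{f'} \circ \mu) = (\delta_{f} \otimes \delta_{f'}) \circ \cp \circ \mu$. Once this holds the conditioning argument applies, and as both maps are non-decreasing $g$ is non-decreasing, so $g$ is again a continuously differentiable subordinator.
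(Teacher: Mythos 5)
Your proof follows the same skeleton as the paper's, which is only three lines long: the identity is declared trivially a subordinator, the composite is asserted to be a L\'evy process by citation to Lalley's notes on subordination, and pathwise monotonicity is checked directly (the paper even writes a strict inequality $g(\omega, f(\omega,t_2)) > g(\omega, f(\omega,t_1))$, which is more than non-decreasing paths actually give; your ``composite of non-decreasing maps is non-decreasing'' is the correct statement). Your identity check and the $g(\omega,0)=0$ and monotonicity checks match the paper's, just more carefully. The substantive difference is that you make explicit the hypothesis hidden inside the citation: Bochner/Lalley subordination yields a L\'evy process $f'(f(t))$ only when the time change $f$ is \emph{independent} of $f'$, and the co-Kleisli composition in $\ceucmeas$ deliberately reuses the same sample point $\omega$ in both factors, so independence is not supplied by the category --- nothing prevents taking $f' = f$, where the conditioning-on-$f$ argument you sketch breaks down. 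As a standalone proof, yours is therefore incomplete exactly where you say it is: you reduce the distributional axioms to an independence claim that you neither establish nor enforce by restricting the class. But you should know this is not a gap relative to the paper --- the paper's proof has the identical hole and simply does not acknowledge it, since the cited subordination theorem carries the independence assumption silently.

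There is a repair available from the paper's own (nonstandard) definitions that dissolves your crux, though for a degenerate reason: the paper's L\'evy axioms require continuous sample paths, and by classical theory a L\'evy process with continuous paths is a Brownian motion with drift; imposing non-decreasing paths kills the Gaussian part, so every subordinator in this category is almost surely of the deterministic form $f(\omega,t) = ct$ with $c \geq 0$ (consistent also with the continuous differentiability demanded of $\ceucmeas$-morphisms, since Brownian paths are nowhere differentiable). Deterministic processes are independent of everything, so the independence you flag holds vacuously and the composite, almost surely $t \mapsto c'ct$, is again a subordinator. Under the standard c\`adl\`ag definition of a L\'evy process, by contrast, the objection you raise is genuine, and the proposition would need either an explicit independence hypothesis (which the single-$\Omega$ co-Kleisli composition cannot provide --- indeed the paper's own $\peuc$ construction exists precisely to restore such independence by expanding the sample space) or a restriction of the subordinator class.
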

\begin{proof}

First, note that the identity arrow on $\rl$ is trivially a subordinator. Next, suppose $f$ and $g$ are subordinators. By Lalley et al. \cite{lalley2007levy} we have that $g \circ f$ is a Levy Process. Since both $f$ and $g$ are non-decreasing, for $t_2 > t_1$ we have for any fixed $\omega \in \Omega$ that:
\begin{gather*}
    g(\omega, f(\omega, t_2)) > g(\omega, f(\omega, t_1)).
\end{gather*}
Therefore, $g \circ f$ is a subordinator as well.
\end{proof}




\subsection{Independence and Dependence in $\ceucmeas$}\label{independenceincokl}

Since all of the stochastic processes in $\ceucmeas$ are defined over the same probability space $\baseprob$,
there is a major difference between how $\ceucmeas$ and $\bstoch$ represent independence and dependence. Given the arrows $f: \Omega \times \rl^a \rightarrow \rl^b$ and $f': \Omega \times \rl^c \rightarrow \rl^d$ in $\ceucmeas$ and the vectors $x_a\in\rl^a, x_c\in\rl^c$, the random variables $f(\_, x_a)$ and $f'(\_, x_c)$ may be either dependent or independent.

In order to see how this differs from the situation in $\bstoch$, recall that the pushforward of $\mu$ along the stochastic process $f: \Omega \times \rl^a \rightarrow \rl^b$ is a mapping from $\ceucmeas$ to $\bstoch$ such that for $x_a \in \rl^a, \sigma_b \in \bc(\rl^b)$:
\begin{gather*}
    f_{*}\mu: \rl^a \times \bc(\rl^b) \rightarrow [0,1]
    \\
    f_{*}\mu(x_a, \sigma_b)  = 
    f(\_, x_a)_{*}\mu(\sigma_b) =
    \mu(f(\_, x_a)^{-1}(\sigma_b)) =
    \int_{\omega  \in \Omega} \delta(f(\omega, x_a), \sigma_b) d\mu.
\end{gather*}
However, this mapping does not form a functor. We see that for $f: \Omega \times \rl^a \rightarrow \rl^b$, $f': \Omega \times \rl^b \rightarrow \rl^c$, $x_a \in \rl^a, \sigma_c \in \mathcal{B}(\rl^c)$:
\begin{gather*}
    \left(f' \circ f\right)_{*}\mu: \rl^a \times \bc(\rl^c) \rightarrow [0,1]
\end{gather*}
\begin{align*}
    \left(f' \circ f\right)_{*}\mu(x_a, \sigma_c) =
    \\
    \mu((f' \circ f)(\_, x_a)^{-1}(\sigma_c)) =
    \\
    \int_{\omega  \in \Omega}
    \delta((f'(\omega, f(\omega, x_a)), \sigma_c) d\mu =
    %
    %
    \\
    \int_{\omega  \in \Omega}
    \left(
    \int_{x_b \in \rl^{b}} 
    \delta(f'(\omega, x_b), \sigma_c)
    d\delta(f(\omega, x_a), \_)
    \right)
    d\mu =
    \\
    %
    \int_{x_b \in \rl^{b}} 
    \int_{\omega \in \Omega}
    \delta(f'(\omega, x_b), \sigma_c) \  d\delta(f(\omega, x_a), \_)  \ d\mu
\end{align*}
whereas:
\begin{gather*}
    \left[f'_{*}\mu \circ f_{*}\mu\right]: \rl^a \times \bc(\rl^c) \rightarrow [0,1]
\end{gather*}
\begin{align*}
    \left[f'_{*}\mu \circ f_{*}\mu\right](x_a, \sigma_c) =
    \\
    %
    \int_{x_b \in \mathbb{R}^b}
    \mu(f'(\_, x_b)^{-1}(\sigma_c))\ 
    d\mu(f(\_, x_a)^{-1}(\_)) = 
    \\
    %
    \int_{x_b \in \mathbb{R}^b}
    \left( \int_{\omega \in \Omega}
    \delta(f'(\omega, x_b), \sigma_c) d\mu \right)
        \left( \int_{\omega \in \Omega}
        d\delta(f(\omega, x_a), \_)d\mu \right).
\end{align*}
These are not necessarily equivalent if the random variables $f'(\_,x_b), x_b\in\rl^b$ are not independent of the random variable $f(\_,x_a)$.

The reason for this mismatch comes down to the fact that composition in $\bstoch$ is based on the Markov property, whereas composition in $\ceucmeas$ is not. In the next Section we will define a new category of stochastic processes that exhibits this independence behavior.

\begin{center}\label{legend}
\begin{table}[h!]
    \centering
    \begin{tabular}{c|c}
        \hline
        Shorthand Name & Full Name \\ [0.5ex] 
        \hline
        \hline
        $\ceucmeas$ & $\mathbf{CoKl}_{(\Omega, \bo)}(\eucmeas)$\\
        \hline
        $\peuc$ & $\parao(\eucmeas)$\\
        \hline
    \end{tabular}
    \caption{We introduce several compositional constructions for building new categories in this section. These can produce unwieldy names, so for readability we have abbreviated some of them here.}
    \label{abbreviations}
\end{table}
\end{center}

\section{The Parameterization Construction}

In order to reason about the behavior of a system of stochastic processes, it is useful to
study them in a simpler setting. There are two simple ways to do this: take pushforwards and study stochastic processes as Markov kernels, or take expectations and study stochastic processes as functions. In order to make these lines of study rigorous, we first need to establish the functoriality of these transformations. To this end, over the next few Sections we build a new category of stochastic processes in which the map $f \rightarrow f_{*}\mu$ is functorial. In Section \ref{section:expectation-composition} we will explore the functoriality of the expectation.

In order to elevate the pushforward to a functor, we need to modify the definition of how stochastic processes compose. Unlike in $\ceucmeas$, where we treat all stochastic processes as if they were defined over the same probability space, the category in this section will consist of stochastic processes defined over different, non-interacting probability spaces. The composition
of two stochastic processes in this new category will produce a stochastic process over the product of those processes' associated probability spaces. This will allow us to treat all of the stochastic processes in this category as if they were mutually independent. 

We note that this strategy of expanding the probability space each time we introduce a new source of randomness is commonly used by probability theorists \citep{tao, Bill86, ash1975topics}.

\subsection{An Subcategory of $\parac$}

Consider the small symmetric strict monoidal categories $\cb$ and $\db$ such that there exists a faithful identity-on-objects strict monoidal functor $\iota: \db \hookrightarrow \cb$. That is, we can think of $\db$ as a subcategory of $\cb$.


\begin{proposition}\label{proposition:parad}
For the small strict symmetric monoidal categories $\cb$ and $\db$ equipped with a faithful identity-on-objects strict monoidal functor $\iota: \db \hookrightarrow \cb$ we can form a subcategory $\parad(\cb)$ of $\para(\cb)$ (Definition \ref{definition:para}) in which the morphisms in $\parad(\cb)[A,B]$ are pairs $(P, f)$ where $P$ is an object in the image of $\iota$ and $f: P \otimes A \rightarrow B$ is a morphism in $\cb$.
\end{proposition}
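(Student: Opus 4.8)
The plan is to check the three things needed to exhibit $\parad(\cb)$ as a subcategory of $\para(\cb)$: that it retains all objects of $\para(\cb)$, that it contains every identity morphism, and that its morphisms are closed under the composition of Definition \ref{definition:para}. Because $\parad(\cb)$ takes its objects, identities, and composition rule directly from $\para(\cb)$, the category axioms (associativity and the unit laws) will be inherited automatically; the only genuine content is that the extra requirement on a morphism $(P,f)$ — that its parameter $P$ be an object in the image of $\iota$ — survives taking identities and forming composites. Note first that any such $(P,f)$ is in particular a morphism of $\para(\cb)$, since $P$ is an object of $\cb$ and $f\colon P \otimes A \to B$ a morphism of $\cb$, so $\parad(\cb)$ is a genuine subcollection.

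Objects require no work: a morphism $(P,f)\colon A \to B$ of $\parad(\cb)$ has arbitrary domain and codomain $A,B$ drawn from $\cb$, so $\parad(\cb)$ has exactly the objects of $\para(\cb)$. For identities, recall from Definition \ref{definition:para} that the identity on $A$ in $\para(\cb)$ is the pair $(I, id_A)$, where $I$ is the monoidal unit of $\cb$ and strictness identifies $I \otimes A$ with $A$. Since $\iota$ is a strict monoidal functor it preserves the unit, so $I = \iota(I_{\db})$ lies in the image of $\iota$; hence every identity of $\para(\cb)$ is already a morphism of $\parad(\cb)$.

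The one substantive step is closure under composition. Given composable morphisms $(P,f)\colon A \to B$ and $(Q,g)\colon B \to C$ of $\parad(\cb)$, their $\para(\cb)$-composite is $(Q \otimes P,\; g \circ (id_Q \otimes f))$. No constraint is imposed on the underlying map, which is just a morphism of $\cb$, so the only thing to verify is that the new parameter $Q \otimes P$ again lies in the image of $\iota$. Writing $P = \iota(P_0)$ and $Q = \iota(Q_0)$ with $P_0,Q_0$ objects of $\db$, strict monoidality of $\iota$ gives $Q \otimes P = \iota(Q_0) \otimes \iota(P_0) = \iota(Q_0 \otimes P_0)$, which is indeed in the image of $\iota$. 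Thus composites of $\parad(\cb)$-morphisms are again $\parad(\cb)$-morphisms.

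I do not anticipate a real obstacle: the whole argument reduces to the observation that the objects in the image of $\iota$ are closed under the tensor product and contain the monoidal unit, both being immediate consequences of $\iota$ being strict monoidal. The faithfulness of $\iota$ is not used in this closure argument — it serves only to justify regarding $\db$ as a subcategory of $\cb$ — and once closure and the presence of identities are established, the remaining subcategory axioms hold because composition and identities coincide with those of $\para(\cb)$.
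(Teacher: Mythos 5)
Your proof is correct and takes essentially the same approach as the paper's: both verify that identities lie in $\parad(\cb)$ because the monoidal unit is in the image of the strict monoidal $\iota$ (the paper writes the identity as $id_A \colon *_{\db} \otimes A \to A$), and that composites stay inside because strict monoidality gives $\iota(Q_0) \otimes \iota(P_0) = \iota(Q_0 \otimes P_0)$. Your side remarks — that associativity and unit laws are inherited from $\para(\cb)$ and that faithfulness plays no role in the closure argument — are accurate glosses on the same argument rather than a different route.
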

\begin{proof}
In order to show that $\parad(\cb)$ is a subcategory of $\para(\cb)$ we simply need to show that $\parad(\cb)$ is closed under composition and contains all identities.

To start, note that  $\parad(\cb)$ contains all identities. Since $\db$ is a strict monoidal category we can write the signature of the identity arrow $id_A: A \rightarrow A$ in $\db$ as $id_A: *_{\db} \otimes A \rightarrow A$ where $*_{\db}$ is the monoidal unit in $\db$. Therefore $id_A$ is an arrow in $\parad(\cb)$. This arrow is trivially the $\parad(\cb)$-identity at $A$. 

Next, note that $\parad(\cb)$ is closed under composition. Consider the morphisms:
\begin{gather*}
    f_1: P_1 \otimes A \rightarrow B
    \qquad
    f_2: P_2 \otimes A \rightarrow B
\end{gather*}
where $P_1, P_2$ are objects in the image of $\iota$. Since $\iota$ is identity on objects and strict monoidal it must be that $P_1 \otimes P_2$ is an object in $\db$. Therefore:
\begin{gather*}
    f_2 \circ_{\para(\cb)} f_1: P_2 \otimes P_1 \otimes A \rightarrow B
\end{gather*}
is a morphism in $\parad(\cb)$.
\end{proof}

\subsection{A Category of Parametric Measurable Maps}
In this Section, we will use the $\parad$ construction (Proposition \ref{proposition:parad}) to build a new category of stochastic processes over which the mapping $f \rightarrow f_{*}\mu$ is functorial. In this category composition will have the same independence structure that it has in $\stoch$.
 
\subsubsection{Lawvere Parameterization}


We begin with the following definition:
\begin{definition}
Suppose $\cb$ is a strict Cartesian monoidal category, $O^{*}$ is a Lawvere theory with generating object $O$, and $\iota$ is a faithful identity-on-objects strict monoidal functor $\iota: O^{*} \hookrightarrow \cb$. Then $\para_{O^{*}}(\cb)$ is a Lawvere parameterization of $\cb$.
\end{definition}


Note that the objects in $O^{*}$ are of the form $O \times O \times \cdots \times O$. When the tensor is repeated $n$ times we will write this as $O^n$. We also write $O^0$ for the monoidal unit $*$. For any strict Cartesian monoidal category $\cb$ with a Lawvere parameterization we can define a mapping:
\begin{gather*}
    \copyfunctor: \para_{O^{*}}(\cb) \rightarrow \coklo(\cb)
\end{gather*}
This mapping acts as identity-on-objects and sends the arrow $f: O^n \times A \rightarrow B$ in $\para_{O^{*}}(\cb)$ to the following arrow in  $\coklo(\cb)$:
\begin{gather*}
    f \circ_{\cb} (\cp_{O}(n) \otimes id^{\cb}_A):  O \times A \rightarrow B.
\end{gather*}
%
Where $id^{\cb}_A$ is the identity arrow on $A$ in $\cb$ and $\cp_{O}(n)$ is the $n-1$ repeated application of the
copy map $O \rightarrow O \times O$ in $\cb$. That is:
\begin{itemize}
    \item $\cp_{O}(3): O \rightarrow O \times O\times O$ is the double application of the copy map $(id_{O}\otimes \cp_{O}) \circ \cp_{O}$.
    \item $\cp_{O}(2): O \rightarrow O \times O$ is just the copy map $\cp_O$.
    \item $\cp_{O}(1): O \rightarrow O$ is the identity map $id_O$ in $\cb$. 
    \item $\cp_{O}(0): O \rightarrow *$ is $\del_{O}$, the unique map from $O$ to the terminal object $*$.
\end{itemize}, 
\begin{proposition}\label{CopyFunctor}
Suppose $\cb$ is a Cartesian monoidal category and $O$ is an object in $\cb$. Then $\copyfunctor: \para_{O^{*}}(\cb) \rightarrow \coklo(\cb)$ is a full identity-on-objects functor.
\end{proposition}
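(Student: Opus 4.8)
The plan is to check the three defining properties in turn: that $\copyfunctor$ is identity-on-objects, that it is full, and that it is a genuine functor (preserving identities and composition). The first is immediate, since $\copyfunctor$ is \emph{defined} to act as the identity on objects. Fullness is also quick: a morphism $A \to B$ in $\coklo(\cb)$ is by definition an arrow $h : O \times A \to B$ of $\cb$, and taking $n = 1$ (so that $\cp_O(1) = id_O$) the morphism $(O, h)$ of $\para_{O^{*}}(\cb)$ has image
\begin{gather*}
\copyfunctor(O, h) = h \circ_{\cb} (\cp_O(1) \otimes id_A) = h \circ_{\cb} (id_O \otimes id_A) = h,
\end{gather*}
so every co-Kleisli morphism is hit. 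Identity preservation is similar: by Proposition \ref{proposition:parad} the $\para_{O^{*}}(\cb)$-identity at $A$ is $(O^0, id_A)$ with $O^0 = *$, and since $\cp_O(0) = \del_O$ its image is $id_A \circ (\del_O \otimes id_A) = \del_O \otimes id_A$, which is exactly the $\coklo(\cb)$-identity at $A$.

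The real content is preservation of composition. Given $(O^n, f) : A \to B$ and $(O^m, g) : B \to C$, their $\para$-composite is $(O^{m+n}, g \circ (id_{O^m} \otimes f))$, so I would verify
\begin{gather*}
g \circ (id_{O^m} \otimes f) \circ (\cp_O(m+n) \otimes id_A) = \copyfunctor(O^m, g) \circ_{\coklo(\cb)} \copyfunctor(O^n, f).
\end{gather*}
Writing $\tilde f = f \circ (\cp_O(n) \otimes id_A)$ and $\tilde g = g \circ (\cp_O(m) \otimes id_B)$ for the two images and unfolding the co-Kleisli composition, the right-hand side becomes $g \circ (\cp_O(m) \otimes id_B) \circ (id_O \otimes \tilde f) \circ (\cp_O \otimes id_A)$. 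The structural fact that makes the two sides agree is the factorization of iterated diagonals,
\begin{gather*}
\cp_O(m+n) = (\cp_O(m) \otimes \cp_O(n)) \circ \cp_O,
\end{gather*}
valid in any Cartesian monoidal $\cb$ by coassociativity and counitality of the copy map and provable by a short induction on $m$ (with the degenerate cases $m = 0$ or $n = 0$ handled by the counit law $(\del_O \otimes id_O) \circ \cp_O = id_O$).

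Granting this lemma, I would reduce both sides to the common normal form $g \circ (\cp_O(m) \otimes \tilde f) \circ (\cp_O \otimes id_A)$. On the left, substituting the factorization into $\cp_O(m+n) \otimes id_A$ and then applying the interchange law yields $(id_{O^m} \otimes f) \circ (\cp_O(m) \otimes (\cp_O(n) \otimes id_A)) = \cp_O(m) \otimes \tilde f$; on the right the interchange law gives $(\cp_O(m) \otimes id_B) \circ (id_O \otimes \tilde f) = \cp_O(m) \otimes \tilde f$ directly. I expect the main obstacle to be bookkeeping rather than conceptual: keeping the tensor factors correctly aligned while repeatedly invoking bifunctoriality of $\otimes$, and establishing the diagonal-factorization lemma cleanly including its degenerate cases. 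Once that lemma is in hand, functoriality falls out as a direct computation.
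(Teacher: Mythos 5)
Your proof is correct and takes essentially the same route as the paper's: identity-on-objects by definition, fullness via the $n=1$ case where $\cp_O(1)=id_O$, identity preservation via the $n=0$ case where $\cp_O(0)=\del_O$, and composition preservation by unfolding the co-Kleisli composite and applying the interchange law. The only difference is organizational — you isolate the factorization $\cp_O(m+n)=(\cp_O(m)\otimes\cp_O(n))\circ\cp_O$ as an explicit lemma and reduce both sides to the common normal form $g\circ(\cp_O(m)\otimes\tilde f)\circ(\cp_O\otimes id_A)$, whereas the paper uses the same coassociativity facts implicitly, step by step, in a single chain of equalities (via $(\cp_O(n)\otimes id_O)\circ\cp_O=\cp_O(n+1)$ and $(id_{O^n}\otimes\cp_O(m))\circ\cp_O(n+1)=\cp_O(n+m)$).
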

\begin{proof}
First, we note that $\copyfunctor$ is identity-on-objects by definition.

Next, consider any objects $A,B$ in $\cb$ and any arrow $f: O \times A \rightarrow B$ in $\coklo(\cb)$ (Definition \ref{definition:coklo}). Then $f$ is also an arrow in $\para_{O^{*}}(\cb)$ and $\copyfunctor$ maps $f$ to:
\begin{gather*}
   f \circ_{\cb} (\cp_{O}(1) \otimes id^{\cb}_{A}) = f. 
\end{gather*}
Therefore $\copyfunctor$ is full. 

Next, since $\cb$ is strict monoidal we have
\begin{gather*}
    A = * \times A = O^0 \times A
\end{gather*}
which implies that $\copyfunctor$ maps the $id_A: A \rightarrow A$ arrow in $\para_{O^{*}}(\cb)$ to the arrow:
\begin{gather*}
    id_A \circ_{\cb} (\cp_{O}(0) \otimes id^{\cb}_A): O \times A \rightarrow A
\end{gather*}
which is the identity arrow in $\coklo(\cb)$. Therefore, $\copyfunctor$ preserves identity morphisms.

Next, we will show $\copyfunctor$ preserves composition. Suppose $f: O^m \times A \rightarrow B$ and $f': O^n \times B \rightarrow C$ are arrows in $\para_{O^{*}}(\cb)$:
\begin{gather*}
    (\copyfunctor f' \circ \copyfunctor f): O \otimes A \rightarrow C
\end{gather*}

\begin{align*}
    (\copyfunctor f' \circ_{\coklo(\cb)} \copyfunctor f) = \\
    %
    \left(f' \circ_{\cb}
    (\cp_{O}(n) \otimes id^{\cb}_B)\right)
    \circ_{\coklo(\cb)}
    \left(f \circ_{\cb}
    (\cp_{O}(m) \otimes id^{\cb}_A)\right) = \\
    %
    \left(f' \circ_{\cb}
    (\cp_{O}(n) \otimes id^{\cb}_B)\right)
    \circ_{\cb}
    (id^{\cb}_{O} \otimes (f \circ_{\cb}
    (\cp_{O}(m) \otimes id^{\cb}_A)))
    \circ_{\cb}
    (\cp_{O} \otimes id^{\cb}_A) = \\
    %
    f' \circ_{\cb}
    (\cp_{O}(n) \otimes (f \circ_{\cb}
    (\cp_{O}(m) \otimes id^{\cb}_A)))
    \circ_{\cb}
    (\cp_{O} \otimes id^{\cb}_A) =
    \\
    %
    f' \circ_{\cb}
    (id_{O^n} \otimes (f \circ_{\cb}
    (\cp_{O}(m) \otimes id^{\cb}_A)))
    \circ_{\cb}
    (\cp_{O}(n+1) \otimes id^{\cb}_A) =
    \\
    %
    f'
    \circ_{\cb}
    (id_{O^n} \otimes f)
    \circ_{\cb}
    (\cp_{O}(n+m) \otimes id^{\cb}_A) = \\
    %
    %
    (f' \circ_{\para_{O^{*}}(\cb)} f)
    \circ_{\cb}
    (\cp_{O}(n+m) \otimes id^{\cb}_A) = \\
    \copyfunctor(f' \circ_{\para_{O^{*}}(\cb)} f).
\end{align*}
%
\end{proof}

\subsection{Applying $\para$ to $\eucmeas$}

Now suppose we have a probability space $\baseprob$ where $\Omega$ is $\rl^k, k \in \mathbb{N}$. We can form the Lawvere theory $(\Omega, \bc(\Omega))^{*}$ with generating object $(\Omega, \bo)$ and tuples:
\begin{gather*}
    (\Omega, \bc(\Omega))^n = (\Omega^n, \bc(\Omega^n))
\end{gather*}
as objects. We can also form the faithful identity-on-objects strict monoidal functor as the inclusion:
\begin{gather*}
    \iota: \obo \hookrightarrow \eucmeas
\end{gather*}
Then for any:
\begin{gather*}
    (\Omega^n, \bc(\Omega^n)) \in \obo
\end{gather*}
we can create the probability space $\baseprobn$ where $\mu^n$ is the product measure:
\begin{gather*}
    \mu^n: \bc(\Omega^n) \rightarrow [0,1]
    \\
    \mu^n(\sigma_1 \times \sigma_2 \times \cdots \times \sigma_n) = \mu(\sigma_1)\mu(\sigma_2)\cdots\mu(\sigma_n).
\end{gather*}
\begin{definition}
We can apply $\parao$ to $\eucmeas$ to form the Lawvere parameterization $\parao(\eucmeas)$, which we will hereafter abbreviate $\peuc$.
\end{definition}
Intuitively, $\peuc$ allows us to reason about probabilistic relationships in terms of measurable functions rather than probability measures.

Next, by Proposition \ref{CopyFunctor}, we have an identity-on-objects functor, $\copyfunctor$, from $\peuc$ to $\ceucmeas$. Let's drill deeper into this relationship. We can view an arrow of the form $f: \Omega^n \times \rl^a \rightarrow \rl^b$ in $\peuc$ as a stochastic process over $\baseprobn$. However, unlike in $\ceucmeas$, if we compose
$f$ with another arrow in $\peuc$, we do not get another stochastic process over $\baseprobn$. Instead, we get a stochastic process over some other probability space. Intuitively, we can think of the stochastic processes in $\peuc$ as being defined over different, non-interacting probability spaces.

Now given some arrow $f: \Omega^n \times \rl^a \rightarrow \rl$ in $\peuc$ and $x_a \in \rl^a$, the measurable function $f(\_, x_a)$ is a real-valued random variable over the probability space $\baseprobn$. The pushforward of $\mu^n$ along this random variable $f(\_,x_a)_{*}\mu^{n}(\_): \bc(\rl) \rightarrow [0,1]$ is then a probability measure over the space $(\rl, \bc(\rl))$. 

In general, we can extend this pushforward procedure to define a mapping between parametric families of measurable maps and Markov kernels. Given some $f: \Omega^n \times  \rl^a \rightarrow \rl^b$ we can define:
\begin{gather*}
\push f: \rl^a \times \bc(\rl^b) \rightarrow [0,1]
\end{gather*}
where for $x_a \in \rl^a, \sigma_b \in \bc(\rl^b)$:
\begin{gather*}
    \push f(x_a, \sigma_b)  = f(\_, x_a)_{*}\mu^n(\sigma_b) =
    \int_{\omega_n  \in \Omega^n} \delta(f(\omega_n, x_a), \sigma_b) d\mu^n.
\end{gather*}
\begin{proposition}\label{PushFunctor}
The mapping $\push$ that takes a parametric family $f: \Omega^n \times \rl^a \rightarrow \rl^b$ of measurable maps to the Markov kernel $f_{*}\mu^n$ is an identity-on-objects
functor from $\peuc$ to $\bstoch$.
\end{proposition}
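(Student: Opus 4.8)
The plan is to verify the three functor axioms in turn: that $\push$ is well-defined (its image lands in $\bstoch$), that it preserves identities, and that it preserves composition. Well-definedness and the identity-on-objects property are immediate from the setup, so the real content is entirely in the composition axiom, where the product-measure structure of $\peuc$ does the essential work.

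First I would confirm that $\push f = f_{*}\mu^n$ is genuinely a $\bstoch$-morphism. Since each $f: \Omega^n \times \rl^a \rightarrow \rl^b$ is jointly Borel-measurable, the slice $f(\_, x_a)$ is measurable for every $x_a$, so $f_{*}\mu^n(x_a, \_)$ is a probability measure on $(\rl^b, \bc(\rl^b))$, while joint measurability yields measurability of $f_{*}\mu^n(\_, \sigma_b)$ in $x_a$; as the objects involved are standard Borel, $f_{*}\mu^n$ is a morphism of $\bstoch$. For identities, recall that the $\peuc$-identity at $\rl^a$ is the pair $(\Omega^0, id_{\rl^a})$ with $\Omega^0 = *$; pushing forward along the trivial measure $\mu^0$ on the one-point space gives $\push(id_{\rl^a})(x_a, \sigma_a) = \delta(x_a, \sigma_a)$, which is precisely the Dirac kernel that serves as the identity in $\bstoch$.

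The heart of the argument is composition. Given $f: \Omega^m \times \rl^a \rightarrow \rl^b$ and $f': \Omega^n \times \rl^b \rightarrow \rl^c$, their $\peuc$-composite has parameter object $\Omega^{n+m}$ and underlying map $g = f' \circ (id_{\Omega^n} \otimes f)$, so that $g((\omega_n, \omega_m), x_a) = f'(\omega_n, f(\omega_m, x_a))$. I would expand $\push(f' \circ f) = g_{*}\mu^{n+m}$ as a single integral, then use $\mu^{n+m} = \mu^n \times \mu^m$ to rewrite it as an iterated integral over $\Omega^n$ and $\Omega^m$. For each fixed $\omega_n$, the change-of-variables identity $\int h \circ T \, d\nu = \int h \, d(T_{*}\nu)$ applied to $T = f(\_, x_a)$ turns the inner $\Omega^m$-integral into an integral over $\rl^b$ against $f(\_, x_a)_{*}\mu^m = \push f(x_a, \_)$. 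Swapping the two remaining integrals by Fubini then collapses the expression to $\int_{x_b \in \rl^b} f'_{*}\mu^n(x_b, \sigma_c) \, d(\push f(x_a, \_))$, which is exactly the $\bstoch$-composite $\push(f') \circ \push(f)$.

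The main obstacle, and indeed the whole conceptual point of the construction, is this composition step: it goes through only because the parameters feeding $f$ and $f'$ are drawn from the independent factors $\Omega^m$ and $\Omega^n$ of the product space, so that the iterated integral factors cleanly. This is precisely what fails in $\ceucmeas$, where $\omega$ is reused and the integrals cannot be separated. The two technical tools I would lean on are Fubini's theorem --- applicable because the Dirac kernel $\delta$ is bounded and nonnegative --- and the pushforward change-of-variables formula.
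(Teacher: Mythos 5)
Your proposal is correct and follows essentially the same route as the paper's proof: identity preservation via the Dirac kernel, and composition via Fubini on the product measure $\mu^{n+m}$ together with the pushforward change-of-variables step (which the paper phrases as an expansion against the Dirac measure $d\delta(f(\omega_n, x_a), \_)$ rather than applying $\int h \circ T\, d\nu = \int h\, d(T_{*}\nu)$ directly, a purely organizational difference). Your explicit check that $\push f$ is a well-defined Markov kernel, and your closing remark contrasting this with the failure in $\ceucmeas$, are welcome additions but do not change the substance of the argument.
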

\begin{proof}
In this proof we rely on the following property of product measures, which holds by Fubini's theorem when $\Omega=\rl^k$:
\begin{gather*}
    %
    \int_{(\omega_n,\omega_m)  \in \Omega^n \times \Omega^m}
    f(\omega_n, \omega_m) d\mu^{n+m} =
    \int_{\omega_n \in \Omega^n} \int_{\omega_m \in \Omega^m}
    f(\omega_n, \omega_m) d\mu^{n}d\mu^{m}
\end{gather*}

We first note that since the objects in $Ob(\peuc) = Ob(\eucmeas)$ are the standard Borel measurable spaces $(\rl^n, \bc(\rl^n))$ it must be that $\push$ maps objects in $\peuc$ to objects in $\bstoch$.

Next, note that for any $\rl^a$, $\push$ maps the identity at $\rl^a$ in $\peuc$ to the identity at $\rl^a$ in $\bstoch$ since:
\begin{align*}
    \push id_{\rl^a}(x_a, \sigma_a) = \\
    \int_{\omega_n  \in \Omega^n} \delta(id_{\rl^a}(\omega_n, x_a), \sigma_a) d\mu^n = \\ 
    \int_{\omega_n  \in \Omega^n} \delta(x_a, \sigma_a) d\mu^n = \\
    \delta(x_a, \sigma_a)
\end{align*}
Next, we will demonstrate that $\push$ preserves composition. Suppose we have some:
\begin{gather*}
    f: \Omega^n \times \rl^a \rightarrow \rl^b
    \qquad
    f': \Omega^m \times \rl^b \rightarrow \rl^c
    \\
    x_a \in \rl^a
    \qquad
    \sigma_c \in \bc(\rl^c)
\end{gather*}
Then we can write:
\begin{gather*}
  \push\left(f' \circ f\right): \rl^a \times \bc(\rl^c) \rightarrow [0,1]
\end{gather*}
\begin{align*}
  \push\left(f' \circ f\right)(x_a, \sigma_c) = \\
  %
  \int_{(\omega_m,\omega_n)  \in \Omega^m \times \Omega^n}
  \delta((f' \circ f)((\omega_m, \omega_n), x_a), \sigma_c) d\mu^{n+m} = \\
  %
  \int_{\omega_m \in \Omega^m}
  \int_{\omega_n \in \Omega^n}
  \delta(f'(\omega_m, f(\omega_n, x_a)), \sigma_c) d\mu^n d\mu^m = \\
  %
  %
  \int_{\omega_m \in \Omega^m}
  \int_{\omega_n \in \Omega^n}
  \int_{x_b \in \rl^b}
  \delta(f'(\omega_m, x_b), \sigma_c) d\delta(f(\omega_n, x_a), \_) d\mu^n d\mu^m  = \\
  %
  \int_{x_b \in \rl^b}
  \int_{\omega_m \in \Omega^m}
  \int_{\omega_n \in \Omega^n}
  \delta(f'(\omega_m, x_b), \sigma_c) d\delta(f(\omega_n, x_a), \_) d\mu^n d\mu^m  = \\
  %
  \int_{x_b \in \rl^b}
  \left[
  \int_{\omega_m \in \Omega^m}
  \delta(f'(\omega_m, x_b), \sigma_c)
  d\mu^m
  \right]
  \left[
  \int_{\omega_n \in \Omega^n}
  d\delta(f(\omega_n, x_a), \_)
  d\mu^n
  \right] =
  \\
  %
  \int_{x_b \in \rl^b}
  \left[
  \int_{\omega_m \in \Omega^m}
  \delta(f'(\omega_m, x_b), \sigma_c)
  d\mu^m
  \right]
  d\left[
  \int_{\omega_n \in \Omega^n}
  \delta(f(\omega_n, x_a), \_)
  d\mu^n
  \right] =
  \\
  %
  \int_{x_b \in \rl^b}
  [\push f'](x_b, \sigma_c)
  \ 
  d[\push f](x_a, \_) = \\
  %
  (\push f' \circ \push f)(x_a, \sigma_c).
\end{align*}
%

\end{proof}

\subsection{Composition Experiments}

We can express the difference between composition in $\peuc$ and $\ceucmeas$ with a simple experiment using the numpy \citep{harris2020array} and scipy \citep{2020SciPy-NMeth} libraries.

Consider the probability space $\base$ where $\Omega=[0,1]$ and $\mu$ is the uniform measure. We can represent samples from this with the following:
\begin{lstlisting}[language=Python]
import numpy as np
omega_samples = np.random.random(10000)
\end{lstlisting}
Now consider the following stochastic process $f: \Omega \times \rl \rightarrow \rl$ over $\base$:
\begin{lstlisting}[language=Python]
from scipy.stats import norm
def f(omega, x):
    normal_points = norm(loc=5 - x, scale=10).ppf(omega)
    return normal_points
\end{lstlisting}
Note that for any $x \in \rl$, the random variable $f(\_, x)$ over has a normal distribution:
\begin{lstlisting}[language=Python]
input_x = 42
f_points = f(omega_samples, input_x)
\end{lstlisting}
%
\begin{figure}[H]\begin{center}
\includegraphics[width=6cm,height=6cm]{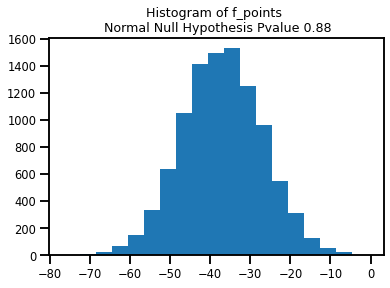}
\label{normal_samples}
\end{center}
\end{figure}
Note that $f$ is an endomorphism on $\rl$ in $\peuc$, so we can take the $\peuc$-composition of $f$ with itself to form the arrow $(f \circ f): \Omega^2 \times \rl \rightarrow \rl$. We write this arrow as:
\begin{lstlisting}[language=Python]
def ff_para(omega1, omega2, x):
    return f(omega2, f(omega1, x))
\end{lstlisting}
Note that $(f \circ f)$ is a stochastic process over the product probability space $(\Omega^2, \mathcal{B}(\Omega^2), \mu^2)$, and that for any $x \in \rl$, the random variable $(f\circ f)(\_, \_, x)$ over $(\Omega^2, \mathcal{B}(\Omega^2), \mu^2)$ has a normal distribution as well:
\begin{lstlisting}[language=Python]
input_x = 42
omega2 = np.random.random((10000, 2))
ff_para_points = ff_para(omega2[:, 0], omega2[:, 1], input_x)
\end{lstlisting}
%
\begin{figure}[H]\begin{center}
\includegraphics[width=6cm,height=6cm]{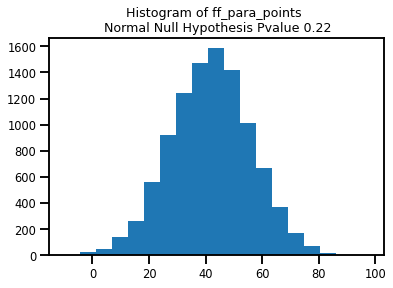}
\label{composed_normal_samples}
\end{center}
\end{figure}
Now recall the functor $\copyfunctor: \peuc \rightarrow \ceucmeas$ from Proposition \ref{CopyFunctor}. This functor acts as identity-on-objects and sends the arrow $f: \Omega^n \times \rl \rightarrow \rl$ in $\peuc$ to the following arrow in  $\ceucmeas$:
\begin{gather*}
    f \circ_{\eucmeas} (\cp_{\Omega}(n) \otimes id_{\rl}):  \Omega \times \rl \rightarrow \rl.
\end{gather*}
We can implement this functor as follows:
\begin{lstlisting}[language=Python]
import inspect
from functools import partial
def CopyFunctor(f):
    def g(omega, x, f=f):
        while len(inspect.getargspec(f).args) > 1:
            f = partial(f, omega)
        return f(x)
    return g
\end{lstlisting}
Note that:
\begin{align*}
    \copyfunctor (f \circ f): \Omega \times \rl \rightarrow \rl
\end{align*}
is a stochastic process over the probability space $\base$. However, unlike $(f \circ f)(\_, \_, x)$, the random variable:
\begin{align*}
    \copyfunctor (f \circ f)(\_, x): \Omega \rightarrow \rl
\end{align*}
is not normal for any fixed $x \in \rl$. Instead, it is constant:
\begin{lstlisting}[language=Python]
input_x = 42
omega = np.random.random(10000)
ff_cokl_points = CopyFunctor(ff_para)(omega, input_x)
\end{lstlisting}
\begin{figure}[H]\begin{center}
\includegraphics[width=6cm,height=6cm]{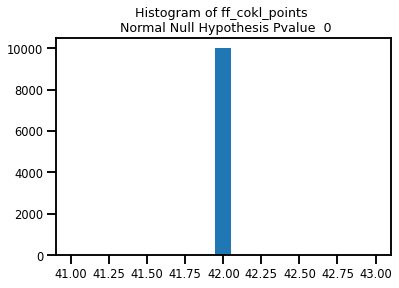}
\label{copied_normal_samples}
\end{center}
\end{figure}

\section{Parameterized Statistical Models}
We have been discussing the arrows in $\peuc$ as parameterized random variables, or stochastic processes, but we can also think of them as $\eucmeas$ arrows with an element of randomness that is dictated by the probability measure $\mu$. One of the primary goals of this work is to replace the domain of Fong et al.'s \cite{fong2019backprop} Backpropagation functor with a probabilistically motivated category over which we can define the error function $er: \rl \times \rl \rightarrow \rl$ through maximum likelihood. Therefore, a natural next step is to extend $\peuc$ to a category in which we can instead think of the arrows as $\para(\eucmeas)$ arrows with an element of randomness added. 

In order to do this, we will replace the stochastic processes in $\peuc$ with parameterized stochastic processes, which we will also refer to as parametric statistical models. That is, the arrows in this category will consist of families of random variables that have two layers of parameterization: one layer acts as the model input (e.g. the independent variable in a linear regression model) and one layer acts as the model parameters (e.g. the slope, intercept and variance terms).

\subsection{The Category $\df$}\label{df}

Given a probability space $\baseprob$ where $\Omega = \rl^k, k\in\mathbb{N}$, any stochastic process $f: \Omega^n \times \rl^a \rightarrow \rl^b$ in $\peuc$ defines a stochastic relationship between values in $\rl^a$ and $\rl^b$. A parametric statistical model is a parameterized family of such relationships. For example, consider a univariate linear regression model $l: \Omega^n \times \rl^3 \times \rl \rightarrow \rl$ where for $\omega_n \in \Omega^n, [a,b,s] \in \rl^3, x \in \rl$:
\begin{gather*}
    l(\omega_n, [a,b,s], x) = ax + b + f_{\mathcal{N}(0, s^2)}(\omega_n)
\end{gather*}
and $f_{\mathcal{N}(0, s^2)}$ is a normally distributed random variable with mean $0$ and variance $s^2$. Any value $[a,b,s] \in \rl^3$ defines the stochastic process, or $\peuc$ arrow:
\begin{gather*}
    l(\_,[a,b,s], \_): \Omega^n \times \rl \rightarrow \rl.
\end{gather*}
For any model input value $x\in\rl$, the function $l(\_,[a,b,s], x)$ is then a random variable defined on the probability space $\baseprobn$. Like with any ordinary univariate linear regression model, this random variable is normally distributed on the real line. 

We can define a category of such models.
%
%
%
%
%
%

\begin{proposition}\label{proposition:df-category}
Suppose we have a probability space $\baseprob$ where $\Omega$ is $\rl^k, k \in \mathbb{N}$. We can define a category $\df$ that has the same objects as $\eucmeas$ (Definition \ref{proposition:eucmeas}) such that the morphisms between $\rl^a$ and $\rl^b$ are $\eucmeas$-morphisms of the form:
\begin{gather*}
f: \Omega^n \times \rl^p \times \rl^a \rightarrow \rl^b
\end{gather*}
The composition of the morphisms:
\begin{gather*}
f_1: \Omega^{n_1} \times \rl^{p_1} \times \rl^a \rightarrow \rl^b
\qquad
f_2: \Omega^{n_2} \times \rl^{p_2} \times \rl^b \rightarrow \rl^c
\end{gather*}
is the morphism:
\begin{gather*}
f_2 \circ f_1: \Omega^{n_2 + n_1} \times \rl^{p_2 + p_1} \times \rl^a \rightarrow \rl^c
\\
(f_2 \circ f_1)(\omega_{n_2}, \omega_{n_1}, x_{p_2}, x_{p_1}, x_{a}) = 
f_2(\omega_{n_2}, x_{p_2}, f_1(\omega_{n_1}, x_{p_1}, x_{a}))
\end{gather*}
\end{proposition}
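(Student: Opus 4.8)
The plan is to verify the three category axioms directly: that composition is well defined (the composite is again an $\eucmeas$-morphism of the prescribed shape), that each object carries an identity morphism, and that composition is associative and unital. Throughout I would lean on the strictness established in Proposition~\ref{proposition:eucmeas}, namely the on-the-nose equalities $\rl^{n+m}=\rl^{n}\times\rl^{m}$ and likewise for the $\Omega$-powers $\Omega^{n+m}=\Omega^{n}\times\Omega^{m}$, so that no associator or unitor coherence arises and the index arithmetic is genuine equality rather than isomorphism.

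First I would check closure. Given $f_1\colon\Omega^{n_1}\times\rl^{p_1}\times\rl^a\rightarrow\rl^b$ and $f_2\colon\Omega^{n_2}\times\rl^{p_2}\times\rl^b\rightarrow\rl^c$, the stated composite $f_2\circ f_1$ has domain $\Omega^{n_2+n_1}\times\rl^{p_2+p_1}\times\rl^a$ and codomain $\rl^c$, so it has the required shape; and since it is built by nesting and tupling the infinitely differentiable maps $f_1,f_2$, it is itself infinitely differentiable, hence a genuine $\eucmeas$-morphism. For identities I would take $id_{\rl^a}\colon\Omega^0\times\rl^0\times\rl^a\rightarrow\rl^a$ to be the projection $(\omega_0,x_0,x_a)\mapsto x_a$ with $\Omega^0=\rl^0=*$, and then verify the unit laws by substituting into the composition formula: composing with $id$ on either side simply drops the trivial $\Omega^0$ and $\rl^0$ factors, which by strictness leaves the other morphism unchanged.

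The associativity check is the substantive computation. For a third morphism $f_3\colon\Omega^{n_3}\times\rl^{p_3}\times\rl^c\rightarrow\rl^d$ I would expand both $(f_3\circ f_2)\circ f_1$ and $f_3\circ(f_2\circ f_1)$ by the given formula and observe that each side evaluates to the same fully nested expression $f_3(\omega_{n_3},x_{p_3},f_2(\omega_{n_2},x_{p_2},f_1(\omega_{n_1},x_{p_1},x_a)))$, the two bracketings differing only in how the $\Omega$- and $\rl$-indices are grouped as $(n_3+n_2)+n_1$ versus $n_3+(n_2+n_1)$ and similarly for the $p$'s; these agree by the associativity of the tensor in $\eucmeas$, which is strict. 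I expect this bookkeeping---keeping the $\omega$-arguments, the parameter-arguments, and the input-argument aligned across the two bracketings---to be the main, though purely mechanical, obstacle, since the formula interleaves three families of variables.

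As a sanity check and an alternative route, I would note that $\df$ is exactly $\para(\peuc)$: a morphism $(\rl^p,f)$ of $\para(\peuc)$ with $f\colon\rl^p\otimes\rl^a\rightarrow\rl^b$ in $\peuc$ unfolds to a map $\Omega^n\times\rl^p\times\rl^a\rightarrow\rl^b$, and the $\para$-composition rule $(Q\otimes P,\,g\circ(id_Q\otimes f))$ reproduces the concatenation of both the $\Omega$-powers and the parameter spaces recorded here. Recognizing this would let one transport the axioms from the general construction of Definition~\ref{definition:para}, but it first requires pinning down the symmetric monoidal structure of $\peuc$, so I would present the direct verification as the primary argument and mention this identification only as corroboration.
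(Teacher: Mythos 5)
Your proposal is correct and follows essentially the same route as the paper's proof: a direct verification that $\df$ contains identities of the form $id\colon \Omega^0\times\rl^0\times\rl^a\rightarrow\rl^a$, is closed under the stated composition, and that both bracketings of a triple composite expand to the same fully nested expression $f_3(\omega_{n_3},x_{p_3},f_2(\omega_{n_2},x_{p_2},f_1(\omega_{n_1},x_{p_1},x_a)))$, with strictness of the tensor handling the index regrouping. Your additional observations — explicitly checking the unit laws and noting the identification of $\df$ with a parameterization of $\peuc$ — go slightly beyond what the paper writes down but are consistent with it.
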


\begin{proof}
We need to show that $\df$ contains all identities, is closed under composition, and that composition is associative.

To start, note that the identity arrow at the object $\rl^a$ in $\df$ is the function:
\begin{gather*}
id: \Omega^{0} \times \rl^{0} \times \rl^a \rightarrow \rl^a
\\
id(x_a) = x_a
\end{gather*}
and therefore $\df$ contains all identities.

Next, note that the composition of the morphisms:
\begin{gather*}
f_1: \Omega^{n_1} \times \rl^{p_1} \times \rl^a \rightarrow \rl^b
\qquad
f_2: \Omega^{n_2} \times \rl^{p_2} \times \rl^b \rightarrow \rl^c
\\
f_2 \circ f_1: \Omega^{n_2 + n_1} \times \rl^{p_2 + p_1} \times \rl^a \rightarrow \rl^c
\end{gather*}
is in $\df[\rl^a, \rl^c]$ and therefore $\df$ is closed under composition.

Next, consider the morphisms
\begin{gather*}
f_1: \Omega^{n_1} \times \rl^{p_1} \times \rl^a \rightarrow \rl^b
\qquad
f_2: \Omega^{n_2} \times \rl^{p_2} \times \rl^b \rightarrow \rl^c
\qquad
f_3: \Omega^{n_3} \times \rl^{p_3} \times \rl^c \rightarrow \rl^d
\end{gather*}
We have that:
\begin{gather*}
f_3 \circ (f_2 \circ f_1): \Omega^{n_3 + (n_2 + n_1)} \times \rl^{p_3 + (p_2 + p_1)} \times \rl^a \rightarrow \rl^d
\end{gather*}
\begin{align*}
(f_3 \circ (f_2 \circ f_1))((\omega_{n_3}, (\omega_{n_2}, \omega_{n_1})), (x_{p_3}, (x_{p_2}, x_{p_1})), x_{a}) = \\
f_3(\omega_{n_3}, x_{p_3}, (f_2 \circ f_1)((\omega_{n_2}, \omega_{n_1}), (x_{p_2}, x_{p_1}), x_{a})) = \\
f_3(\omega_{n_3}, x_{p_3}, f_2(\omega_{n_2}, x_{p_2}, f_1(\omega_{n_1}, x_{p_1}, x_{a})))
\end{align*}
which is equal to:
\begin{gather*}
(f_3 \circ f_2) \circ f_1: \Omega^{(n_3 + n_2) + n_1} \times \rl^{(p_3 + p_2) + p_1} \times \rl^a \rightarrow \rl^d
\end{gather*}
\begin{align*}
((f_3 \circ f_2) \circ f_1)(((\omega_{n_3}, \omega_{n_2}), \omega_{n_1}), ((x_{p_3}, x_{p_2}), x_{p_1}), x_{a}) = \\
(f_3 \circ f_2)((\omega_{n_3}, \omega_{n_2}), (x_{p_3}, x_{p_2}), f_1( \omega_{n_1}, x_{p_1}, x_{a})) = \\
f_3(\omega_{n_3}, x_{p_3}, f_2(\omega_{n_2}, x_{p_2}, f_1(\omega_{n_1}, x_{p_1}, x_{a})))
\end{align*}
and therefore composition in $\df$ is associative. 
\end{proof}

The name $\df$ derives from the fact that the arrows in this category are \textbf{D}iscriminative and \textbf{F}requentist statistical models (see Table \ref{abbreviations} for a list of all such abbreviations). That is, each arrow operates as if both the parameters and input values are fixed and only the output value is probabilistic. For example, the homset $\df[\rl,\rl]$ includes the linear regression model above. In contrast, generative models and Bayesian models assume a probability distribution over the input and parameter values respectively.

\subsection{Gaussian-Preserving Transformations}

\subsubsection{A Subcategory of Gaussian-Preserving Transformations}

\begin{definition}
A Gaussian-preserving transformation $T:\rl^a \rightarrow \rl^b$ is a Borel measurable function such that for any multivariate normal random variable $f: \Omega^n \rightarrow \rl^a$ defined on the probability space $\baseprobn$, the random variable $(T \circ f): \Omega^n \rightarrow \rl^b$ is multivariate normal and we have:
\begin{gather*}
    \int_{\omega_n \in \Omega^n} T(f(\omega_n)) d\mu =
    T\left(\int_{\omega_n \in \Omega^n}  f(\omega_n) d\mu\right).
\end{gather*}
\end{definition}
%
\noindent For example, any linear function is Gaussian-preserving.

Now for some probability space $\baseprob$ where $\Omega = \rl^k, k\in \mathbb{N}$, we can construct a set of $\df$-arrows $\nm$ such that for any $f \in \nm$ with the signature:
\begin{gather*}
    f: \Omega^n \times \rl^p  \times \rl^a \rightarrow \rl^b
\end{gather*}
there exists some map $T: \rl^p \times \rl^a \rightarrow \rl^b$ and multivariate normal random variable $G: \Omega^n \rightarrow \rl^b$ defined on the probability space $\baseprobn$ such that for all $\omega_n \in \Omega^n , x_p \in \rl^p, x_a \in \rl^a$ the map $T(x_p, \_): \rl^a \rightarrow \rl^b$ is a Gaussian-preserving transformation and:
\begin{gather*}
    f(\omega_n, x_p, x_a) = T(x_p, x_a) + G(\omega_n).
\end{gather*}
Note that this includes the univariate linear regression model $l$, as well as the identity arrow, since constant distributions are multivariate normal with variance $0$.

%
Since $\nm$ contains the identity arrows we can construct a useful subcategory of $\df$.
\begin{definition}
$\dfn$ is the category with the same objects as $\df$ and arrows generated by the $\df$-composition of arrows in $\nm$.
\end{definition}

\begin{proposition}\label{proposition:multivariate-normal-closure}
Given any arrow $f: \Omega^n \times \rl^p \times \rl^a \rightarrow \rl^b$ in $\dfn$ and $x_p \in \rl^p, x_a \in \rl^a$, $f(\_,x_p,x_a): \Omega^n \rightarrow \rl^b$ is a multivariate normal random variable defined on the probability space $\baseprobn$.
\end{proposition}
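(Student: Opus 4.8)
The plan is to argue by induction on the number of $\nm$-arrows whose $\df$-composition yields the given $\dfn$-arrow, since by definition every morphism of $\dfn$ is of the form $g_k \circ g_{k-1} \circ \cdots \circ g_1$ with each $g_i \in \nm$. Throughout I would fix the parameter value $x_p$ and input $x_a$ and study the resulting random variable over the appropriate product probability space $\baseprobn$.

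For the base case ($k=1$) I would unfold the defining decomposition of an $\nm$-arrow. By construction, any $f \in \nm$ can be written as $f(\omega_n, x_p, x_a) = T(x_p, x_a) + G(\omega_n)$ where $G: \Omega^n \to \rl^b$ is multivariate normal on $\baseprobn$ and $T(x_p, \_)$ is Gaussian-preserving. Fixing $x_p$ and $x_a$, the term $T(x_p, x_a)$ is a constant vector, so $f(\_, x_p, x_a)$ is $G$ shifted by a constant, hence multivariate normal (same covariance, translated mean). This also covers the identity arrow, which lies in $\nm$.

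For the inductive step I would write a length-$k$ composite as $g \circ h$, where $g$ is the outermost $\nm$-arrow $g(\omega_{n_g}, x_{p_g}, y) = T_g(x_{p_g}, y) + G_g(\omega_{n_g})$ and $h$ is the length-$(k-1)$ composite. Unfolding the $\df$-composition (Proposition \ref{proposition:df-category}) and fixing parameters and input gives
\[
(g \circ h)(\omega_{n_g}, \omega_{n_h}, x_{p_g}, x_{p_h}, x_a) = T_g(x_{p_g}, h(\omega_{n_h}, x_{p_h}, x_a)) + G_g(\omega_{n_g}).
\]
By the inductive hypothesis $h(\_, x_{p_h}, x_a): \Omega^{n_h} \to \rl^b$ is multivariate normal; since $T_g(x_{p_g}, \_)$ is Gaussian-preserving, the first summand $T_g(x_{p_g}, h(\_, x_{p_h}, x_a))$ is again multivariate normal, and the second summand $G_g$ is multivariate normal by assumption.

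The key point — and the step I expect to be the main obstacle — is that these two summands are independent, so that their sum is multivariate normal. Independence follows from the product-measure structure built into $\df$-composition: the first summand depends only on the coordinate block $\omega_{n_h}$ and the second only on the fresh block $\omega_{n_g}$, and under the product measure $\mu^{n_h + n_g}$ (the Fubini factorization used in Proposition \ref{PushFunctor}) functions of disjoint coordinate blocks are independent. I would then invoke the standard fact that the sum of two independent multivariate normal random vectors is multivariate normal, completing the induction. The care required is to make explicit that the ``new'' randomness introduced by each outer arrow lives over its own independent probability space — precisely the feature distinguishing $\peuc$/$\df$ composition from the shared-space composition in $\ceucmeas$ — so that no dependence between the summands can spoil normality.
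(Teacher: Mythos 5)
Your proposal is correct and follows essentially the same route as the paper's proof: induction on the number of $\nm$-factors, with the base case reading off the decomposition $T(x_p,x_a) + G(\omega_n)$, and the inductive step peeling off the outermost $\nm$-arrow, applying the Gaussian-preserving map $T'(x_q,\_)$ to the inductively-normal inner composite, and concluding via the sum of independent multivariate normals. The independence step you flag as the main obstacle is exactly what the paper handles with its explicit lifting lemma (extending a random variable on $\Omega^n$ to $f^r$ on $\Omega^{m+n}$ without changing its pushforward), which is the formalized version of your observation that the two summands depend on disjoint coordinate blocks of the product measure.
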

\begin{proof}
We will show that this property holds for the arrows in $\nm$ and that it is preserved by composition.

To begin, note that for any $n,m$, the pushforward of $\mu^m$ along $f: \Omega^{m} \rightarrow \rl^a$ is equivalent to the pushforward of $\mu^{m+n}$ along the following random variable:
\begin{gather*}
    f^l: \Omega^{m+n} \rightarrow \rl^a
    \\
    f^l(\omega_m, \omega_n) = f(\omega_m)
\end{gather*}
We can see this as follows. For any $\sigma_{a} \in \bc(\rl^a)$:
\begin{gather*}
f^{l}_{*}\mu^{m+n}: \bc(\rl^a) \rightarrow [0,1]
\end{gather*}
\begin{align*}
    f^{l}_{*}\mu^{m+n}(\sigma_{a}) = \\
    %
    \int_{(\omega_m, \omega_n) \in \Omega^{m+n}}
    \delta(f^{l}(\omega_m, \omega_n), \sigma_a) d\mu^{m+n} = \\
    \int_{\omega_m \in \Omega^m}
    \int_{\omega_n \in \Omega^n}
    \delta(f^{l}(\omega_m, \omega_n), \sigma_a) d\mu^{m}d\mu^{n} = \\
    \int_{\omega_m \in \Omega^m}
    \delta(f(\omega_{m}), \sigma_a) d\mu^{m}
    \int_{\omega_n \in \Omega^n}d\mu^{n} = \\
    f_{*}\mu^{m}(\sigma_{a}).
\end{align*}
By a similar argument we have that the pushforward of $\mu^m$ along $f: \Omega^{m} \rightarrow \rl^a$ is equivalent to the pushforward of $\mu^{n+m}$ along the random variable $f^r(\omega_n, \omega_m) = f(\omega_m)$. 

Next, we note that for any $x_p \in \rl^p, x_a \in \rl^a$ and arrow $f: \Omega^n \times \rl^p \times \rl^a \rightarrow \rl^b \in \nm$, the random variable $f(\_, x_p, x_a):\Omega^n\rightarrow\rl^b$ is multivariate normal and defined on the probability space $(\Omega^n, \bc(\Omega^n), \mu^n)$. This follows from the fact that for $\omega_n \in \Omega^n$:
\begin{gather*}
f(\omega_n, x_p, x_a) = T(x_p, x_a) + G(\omega_n)
\end{gather*}
where $T(x_p, x_a)$ is a constant and $G: \Omega^n\rightarrow\rl^b$ is multivariate normal. Next, we show that for any $x_p \in \rl^p, x_q \in \rl^q, x_a \in \rl^a$, arrow $f': \Omega^m \times \rl^q \times \rl^b \rightarrow \rl^c$ in $\nm$ and arrow $f: \Omega^n \times \rl^p \times \rl^a \rightarrow \rl^b$ in $\df$ such that the random variable $f(\_, x_p, x_a): \Omega^n\rightarrow \rl^b$ is multivariate normal, the random variable:
\begin{gather*}
    (f' \circ f)(\_, (x_q, x_p), x_a): \Omega^{m+n} \rightarrow \rl^b
\end{gather*}
is multivariate normal over $(\Omega^{m+n}, \bc(\Omega^{m+n}), \mu^{m+n})$ since:
\begin{align*}
      (f' \circ f)((\omega_m, \omega_n), (x_q, x_p), x_a) =\\
      f'(\omega_m, x_q, f(\omega_n, x_p, x_a)) = \\
      T'(x_q, f(\omega_n, x_p, x_a)) + G'(\omega_m).
\end{align*}
Since the random variable $f(\_, x_p, x_a): \Omega^{n} \rightarrow \rl^b$ is multivariate normal over $(\Omega^{n}, \bc(\Omega^{n}), \mu^{n})$, by the note above we have that the random variable:
\begin{gather*}
    f^r(\_, x_p, x_a): \Omega^{m+n} \rightarrow \rl^b
    \\
    f^r((\omega_m, \omega_n), x_p, x_a) = f(\omega_n, x_p, x_a)
\end{gather*}
defined over $(\Omega^{m+n}, \bc(\Omega^{m+n}), \mu^{m+n})$ is multivariate normal. Since $x_q$ is constant this implies that the following random variable is also multivariate normal:
\begin{gather*}
    T'(x_q, f^r(\_, x_p, x_a)): \Omega^{m+n} \rightarrow \rl^c.
\end{gather*}
Similarly, the random variable:
\begin{gather*}
    G'^{l}: \Omega^{m+n} \rightarrow \rl^b
    \\
    G'^{l}(\omega_m, \omega_n) = G'(\omega_m)
\end{gather*}
is also multivariate normal and independent of $T(x_q, f^r(\_, x_p, x_a))$. Therefore, we can write: 
\begin{align*}
    (f' \circ f)((\omega_m, \omega_n), (x_q, x_p), x_a) = \\
      T'(x_q, f(\omega_n, x_p, x_a)) + G'(\omega_m) = \\
      T'(x_q, f^{r}((\omega_m, \omega_n), x_p, x_a)) + G'^{l}(\omega_m, \omega_n).
\end{align*}
Since this is a sum of independent normally distributed random variables, the following random variable is also multivariate normal:
\begin{gather*}
    (f' \circ f)(\_, (x_q, x_p), x_a): \Omega^{m+n} \rightarrow \rl^c.
\end{gather*}
\end{proof}
As an aside, note that $\nm$ itself is not closed under composition. Suppose
\begin{gather*}
    f': \Omega^m \times \rl^q \times \rl^b \rightarrow \rl^c
    \\
    f: \Omega^n \times \rl^p \times \rl^a \rightarrow \rl^b
\end{gather*}
are in $\nm$ and that:
\begin{gather*}
    f'(\omega_m, x_q, x_b) = T'(x_q, x_b) + G'(\omega_m)
\end{gather*}
where $T'(x_q, x_b) = \|x_q\|_1 x_b$. Note that $T'$ is Gaussian preserving since the product of a constant and a Gaussian is Gaussian. Now if we write:
\begin{gather*}
    f(\omega_n, x_p, x_a) = T(x_p, x_a) + G(\omega_n)
\end{gather*}
we see that:
\begin{gather*}
    (f' \circ f): \Omega^{m+n} \times \rl^{q+p} \times \rl^a \rightarrow \rl^c
    \\
    (f' \circ f)((\omega_m, \omega_n), (x_q, x_p), x_a) =
    \|x_q\|_1 T(x_p, x_a) + \|x_q\|_1 G(\omega_n) + G'(\omega_m),
\end{gather*}
which we cannot express as a sum of a Gaussian-preserving transformation over $\rl^{q+p} \times \rl^a \rightarrow \rl^b$ and a multivariate normal random variable defined on $(\Omega^{n+m}, \mathcal{B}(\Omega^{n+m}), \mu^{n + m})$. $(f' \circ f)$ is therefore not in $\nm$. However, for any choice of $x_q \in \rl^q, x_p \in \rl^p, x_a \in \rl^a$ the random variable:
\begin{gather*}
    (f' \circ f)(\_, (x_q, x_p), x_a): \Omega^{n+m} \rightarrow \rl^c
\end{gather*}
is a linear function of multivariate normal random variables and is therefore itself multivariate normal.

\subsubsection{Relationship to $\gauss$}
$\dfn$ is similar to the category $\gauss$ from Section 6 of Fritz \cite{fritz2020synthetic}, with a few key differences.
\begin{definition}
In the category $\gauss$ \citep{fritz2020synthetic} objects are natural numbers and morphisms $a \rightarrow b$ are tuples $(M, C, s)$ where $M$ is a matrix in $\rl^{b \times a}$, $C$ is a positive semidefinite matrix in $\rl^{b \times b}$ and $s$ is a vector in $\rl^b$.
\end{definition}
Intuitively, the morphisms in $\gauss$ represent transformations of random variables. That is, $(M, C, s)$ implicitly represents the following transformation of random variables:
\begin{gather*}
    g(f) = Mf + \xi_{s, C}.
\end{gather*}
where $\xi_{s, C}$ is a multivariate normal random variable with mean $s$ and covariance matrix $C$ that is independent of $f$. If the random variable $f$ is normally distributed, then $g(f)$ is as well.

A primary difference between $\gauss$ and $\dfn$ is that the morphisms in $\dfn$ explicitly include the functional form of $\xi_{s, C}$ in the morphism itself. For any arrow $(M, C, s): a\rightarrow b$ in $\gauss$ and a choice of such an $\xi_{s, C}$ over $\baseprob$, we can form the $\dfn$ arrow:
\begin{gather*}
    f': \Omega \times \rl^0 \times \rl^a \rightarrow \rl^b
\end{gather*}
where for $\omega \in \Omega, x_a \in \rl^a$:
\begin{gather*}
    f'(\omega, x_a) = Mx_a + \xi_{s, C}(\omega).
\end{gather*}
However, this arrow is dependent on the choice of $\xi_{s, C}$.

\subsection{Expectation Composition}\label{section:expectation-composition}

\begin{definition}\label{ExpectationClosureDef}
A subcategory $\cb$ of $\df$ is an Expectation Composition category if for any $f: \Omega^n \times \rl^p \times \rl^a \rightarrow \rl^b$ and $f': \Omega^m \times \rl^q \times \rl^b \rightarrow \rl^c$ in $\cb$ and $x_q \in \rl^q, x_p \in \rl^p, x_a \in \rl^a$:
\begin{gather*}
    \int_{(\omega_{m}, \omega_{n}) \in \Omega^{m+n}}
    f'(\omega_m, x_q, f(\omega_{n}, x_p, x_a))
    d\mu^{m+n} = \\
    \int_{\omega_{m} \in \Omega^{m}}
    f'\left(\omega_m, x_q, 
        \int_{\omega_{n} \in \Omega^{n}} f(\omega_{n}, x_p, x_a) d\mu^n
    \right)
    d\mu^{m}.
\end{gather*}
\end{definition}

\begin{proposition}\label{ExpectationClosureProp}
$\dfn$ is an Expectation Composition category.
\end{proposition}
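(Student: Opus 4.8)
The plan is to reduce the claimed identity to two ingredients: that the inner random variable $f(\_, x_p, x_a)$ is multivariate normal (Proposition \ref{proposition:multivariate-normal-closure}), and that for each fixed $\omega_m$ and $x_q$ the partially evaluated map $f'(\omega_m, x_q, \_): \rl^b \rightarrow \rl^c$ is a Gaussian-preserving transformation. Granting these, I would first invoke Fubini's theorem (valid since $\Omega = \rl^k$, exactly as used in the proof of Proposition \ref{PushFunctor}) to rewrite the left-hand side as the iterated integral $\int_{\Omega^m} \int_{\Omega^n} f'(\omega_m, x_q, f(\omega_n, x_p, x_a)) \, d\mu^n \, d\mu^m$. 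For each fixed $\omega_m$, the inner integral is the expectation of the Gaussian-preserving map $f'(\omega_m, x_q, \_)$ applied to the multivariate normal random variable $f(\_, x_p, x_a)$, so the defining property of Gaussian-preserving transformations pushes the expectation inside, producing $f'(\omega_m, x_q, \int_{\Omega^n} f(\omega_n, x_p, x_a) \, d\mu^n)$. Integrating this over $\omega_m$ yields precisely the right-hand side of the Expectation Composition identity.

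The real content, and the step I expect to require the most care, is showing that $f'(\omega_m, x_q, \_)$ is Gaussian-preserving for \emph{every} arrow $f'$ of $\dfn$, not just for the generators in $\nm$. I would prove this by induction on the number of $\nm$-factors in a composite, using two closure facts. First, for a generator $f' \in \nm$ we have $f'(\omega_m, x_q, x_b) = T'(x_q, x_b) + G'(\omega_m)$ with $T'(x_q, \_)$ Gaussian-preserving; adding the constant $G'(\omega_m)$ preserves normality of the image and, since $\mu$ is a probability measure, leaves the expectation-commutation identity intact, so $T'(x_q, \_) + G'(\omega_m)$ is again Gaussian-preserving. Second, the composite of two Gaussian-preserving maps is Gaussian-preserving: normality survives each factor, and the commutation identity chains because $\int T_2(T_1(g)) \, d\mu = T_2(\int T_1(g) \, d\mu) = T_2(T_1(\int g \, d\mu))$. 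For a $\df$-composite $f' = f'_2 \circ f'_1$, fixing all of the fresh-randomness slots and all of the parameter slots presents $f'(\omega_m, x_q, \_)$ as the ordinary composition of the two partially evaluated maps, so the two closure facts propagate the property up the composition tree.

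Finally, the integrability needed to license the Fubini step comes for free: the composite $f' \circ f$ is itself an arrow of $\dfn$, so Proposition \ref{proposition:multivariate-normal-closure} guarantees $(f' \circ f)(\_, (x_q, x_p), x_a)$ is multivariate normal and hence has finite first moment, and for each fixed $\omega_m$ the inner integrand is the Gaussian-preserving image of a multivariate normal variable and therefore also multivariate normal. The genuine obstacle is thus not the analytic manipulation but the inductive bookkeeping: one must verify that freezing the noise slot $\omega_m$ and parameter slot $x_q$ of a general $\dfn$-arrow really does exhibit it as a composite of Gaussian-preserving maps, and that ``Gaussian-preserving'' is robust both under the additive shifts contributed by the independent noise terms $G'$ and under the composition that glues generators together.
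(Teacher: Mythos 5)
Your proof is correct, and it reorganizes the argument rather than reproducing the paper's. The paper proves the identity of Definition \ref{ExpectationClosureDef} by inducting directly on the number $k$ of $\nm$-factors of $f'$: the base case expands $f'(\omega_m, x_q, x_b) = T'(x_q,x_b) + G'(\omega_m)$ and pushes $\int_{\Omega^n}$ through $T'(x_q,\_)$ applied to the multivariate normal variable $f(\_,x_p,x_a)$; the inductive step peels off the outermost generator $h = T_h + G_h$ of $f' = h \circ f'_{k-1}$, uses Proposition \ref{proposition:multivariate-normal-closure} to see that $f'_{k-1}(\_,x_{q'},f(\_,x_p,x_a))$ is multivariate normal so that the integral passes through $T_h$, and then applies the induction hypothesis to the pair $(f'_{k-1},f)$, carrying the nested-integral bookkeeping inline at every stage. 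You instead isolate a reusable lemma --- for every $\dfn$-arrow and every fixed $(\omega_m,x_q)$ the frozen map $f'(\omega_m,x_q,\_)$ is Gaussian-preserving, proved by your own induction via closure of Gaussian-preserving maps under additive constants and under composition --- after which the Expectation Composition identity falls out of one Fubini step plus a single application of the Gaussian-preserving property to $f(\_,x_p,x_a)$ (multivariate normal, again by Proposition \ref{proposition:multivariate-normal-closure}). Both routes rest on the same ingredients (the $T+G$ form of the generators, the expectation-commutation clause in the definition of Gaussian-preserving maps, and multivariate-normal closure), but your factorization buys modularity and a strictly stronger intermediate statement: frozen $\dfn$-maps are Gaussian-preserving even though $\nm$ itself is not closed under composition --- which is consistent with the paper's aside on that point, since the obstruction there is that the noise term of a composite becomes parameter-dependent, not that its frozen maps fail to be Gaussian-preserving. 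The paper's inline induction, by contrast, never needs to state or prove composition-closure of Gaussian-preserving maps, at the cost of repeating the integral manipulations inside the inductive step. Your integrability remarks (finite first absolute moments from normality, licensing the Fubini interchange) are at the same level of rigor as the paper's own use of Fubini in Proposition \ref{PushFunctor}, so no gap remains on that front.
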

\begin{proof}
Consider some $f: \Omega^n \times \rl^p \times \rl^a \rightarrow \rl^b$ and $f': \Omega^m \times \rl^q \times \rl^b \rightarrow \rl^c$ in $\dfn$ and $x_q \in \rl^q, x_p \in \rl^p, x_a \in \rl^a$. We will prove by induction that Definition \ref{ExpectationClosureDef} holds.

By the definition of $\dfn$, there exists some $k\in\mathbb{N}$ such that we can express $f'$ as a composition of $k$ arrows in $\nm$. First note that if $k=1$, then $f'$ is in $\nm$, and the statement must hold since for $x_q \in \rl^q, x_p \in \rl^p, x_a \in \rl^a$:
\begin{align*}
    \int_{(\omega_{m}, \omega_{n}) \in \Omega^{m+n}} f'(\omega_m, x_q, f(\omega_{n}, x_p, x_a)) d\mu^{m+n}  = \\
    %
    \int_{(\omega_{m}, \omega_{n}) \in \Omega^{m+n}}
    T'(x_q, f(\omega_{n}, x_p, x_a)) + G'(\omega_m)  d\mu^{m+n}= \\
    %
    \int_{\omega_{m} \in \Omega^{m}}  
    \int_{\omega_{n} \in \Omega^{n}} 
    T'(x_q, f(\omega_{n}, x_p, x_a))  d\mu^{n} + G'(\omega_m) d\mu^{m}= \\
    %
    \int_{\omega_{m} \in \Omega^{m}}  
    T'\left(x_q, \int_{\omega_{n} \in \Omega^{n}} f(\omega_{n}, x_p, x_a)  d\mu^{n}\right) +
        G'(\omega_m) d\mu^{m}= \\
    %
    \int_{\omega_{m} \in \Omega^{m}}
    f'\left(\omega_m, x_q, 
        \int_{\omega_{n} \in \Omega^{n}} f(\omega_{n}, x_p, x_a) d\mu^n
    \right)
    d\mu^{m}.
\end{align*}
Next, if $k > 1$ then we can express
$f' = h \circ f'_{k-1}$, where $h$ is in $\nm$ and $f'_{k-1}$ is the composition of $k-1$ arrows in $\nm$. Without loss of generality we will assume $f'_{k-1}$ and $h$ have the following signatures:
\begin{gather*}
    f'_{k-1}: \Omega^{m'} \times \rl^{q'} \times \rl^b \rightarrow \rl^{d} \qquad h: \Omega^{m''} \times \rl^{q''} \times \rl^{d} \rightarrow \rl^{c}.
\end{gather*}
Note that $q'+q'' = q$ and $m'+m'' = m$. Now we can show the following, where the step marked $*$ holds by induction and $x_{q''} \in \rl^{q''}, x_{q'} \in \rl^{q'}, x_p \in \rl^p, x_a \in \rl^a$:
\begin{align*}
    \int_{(\omega_{m''}, \omega_{m'}, \omega_{n}) \in \Omega^{m'' + m'+n}}
    f'((\omega_{m''}, \omega_{m'}), (x_{q''}, x_{q'}), f(\omega_{n}, x_p, x_a)) d\mu^{m'' + m'+n}  = \\
    %
    \int_{(\omega_{m''}, \omega_{m'}, \omega_{n}) \in \Omega^{m'' + m'+n}}
    h(\omega_{m''}, x_{q''}, f'_{k-1}(\omega_{m'}, x_{q'}, f(\omega_{n}, x_p, x_a))) d\mu^{m'' + m'+n}  = \\
    %
    \int_{(\omega_{m''}, \omega_{m'}, \omega_{n}) \in \Omega^{m'' + m'+n}}
    T_h(x_{q''}, f'_{k-1}(\omega_{m'}, x_{q'}, f(\omega_{n}, x_p, x_a))) + G_h(\omega_{m''})  d\mu^{m'' + m' + n}= \\
    %
    \int_{\omega_{m''} \in \Omega^{m''}}
    T_h\left(
    x_{q''}, 
    \int_{(\omega_{m'}, \omega_{n}) \in \Omega^{m' + n}}  f'_{k-1}(\omega_{m'}, x_{q'}, f(\omega_{n}, x_p, x_a))
        d\mu^{m' + n}
    \right)
     + G_h(\omega_{m''})  d\mu^{m''} =^{*} \\
    %
    %
    \int_{\omega_{m''} \in \Omega^{m''}}
    T_h\left(
    x_{q''}, 
    \int_{\omega_{m'} \in \Omega^{m'}}  
    f'_{k-1}\left(\omega_{m'}, x_{q'}, 
    \int_{\omega_{n} \in \Omega^{n}}  f(\omega_{n}, x_p, x_a)
        d\mu^n
    \right)d\mu^{m'}\right)
     + G_h(\omega_{m''})  d\mu^{m''} = \\
    %
    \int_{(\omega_{m''}, \omega_{m'}) \in \Omega^{m'' + m'}}
    T_h\left(
    x_{q''}, 
    f'_{k-1}\left(\omega_{m'}, x_{q'}, 
    \int_{\omega_{n} \in \Omega^{n}}  f(\omega_{n}, x_p, x_a)
        d\mu^n
    \right)\right)
     + G_h(\omega_{m''})  d\mu^{m'' + m'} = \\
    %
    %
    \int_{(\omega_{m''}, \omega_{m'}) \in \Omega^{m'' + m'}}
    h\left(
    \omega_{m''}, x_{q''}, 
    f'_{k-1}
    \left(\omega_{m'}, x_{q'}, 
    \int_{\omega_{n} \in \Omega^{n}}  
    f(\omega_{n}, x_p, x_a)
    d\mu^n
    \right)\right)
    d\mu^{m'' + m'} = \\
    %
    \int_{(\omega_{m''}, \omega_{m'}) \in \Omega^{m'' + m'}}
    f'\left((\omega_{m''}, \omega_{m'}), (x_{q''}, x_{q'}),
        \int_{\omega_{n} \in \Omega^{n}} f(\omega_{n}, x_p, x_a) d\mu^n
    \right)
    d\mu^{m'' + m'}.
\end{align*}
By induction we have that the original statement holds for all $f',f \in \dfn$.
\end{proof}

We can now define the following functor:
\begin{proposition}\label{proposition:expectation-functor}
Suppose $\cb \subseteq \df$ is an Expectation Composition category. We can define a map $\expectation: \cb \rightarrow \para(\eucmeas)$ that acts as the identity on objects and sends the arrow $f: \Omega^n \times \rl^p \times \rl^a \rightarrow \rl^b$ in $\cb$ to the following function:
\begin{gather*}
    f_E: \rl^p \times \rl^a \rightarrow \rl^b 
    \\
    f_E(x_p, x_a) = E_{\mu^n}[f(\_,x_p, x_a)] =
    \int_{\omega_{n} \in \Omega^{n}}
    f(\omega_n,x_p, x_a) d\mu^n.
\end{gather*}
$\expectation: \cb \rightarrow \para(\eucmeas)$ is a functor.
\end{proposition}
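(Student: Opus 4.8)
The plan is to verify the three functor axioms for $\expectation$: that it sends each $\cb$-morphism to a well-defined morphism of $\para(\eucmeas)$, that it preserves identities, and that it preserves composition. Object assignment is immediate since $\expectation$ is identity-on-objects by construction. For well-definedness, recall that a morphism $\rl^a \to \rl^b$ in $\para(\eucmeas)$ is a pair $(P, g)$ with $g : P \otimes \rl^a \to \rl^b$ infinitely differentiable (Definition \ref{definition:para}, Proposition \ref{proposition:eucmeas}). Given $f : \Omega^n \times \rl^p \times \rl^a \to \rl^b$ in $\cb$, its image $f_E$ has parameter object $\rl^p$ and the correct type $\rl^p \times \rl^a \to \rl^b$; the one remaining obligation is that $f_E(x_p, x_a) = \int_{\Omega^n} f(\omega_n, x_p, x_a)\, d\mu^n$ is infinitely differentiable in $(x_p, x_a)$. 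Since $f$ is smooth and $\mu^n$ is a probability measure, I would obtain this by differentiating under the integral sign.

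Identity preservation is a direct computation. The $\df$-identity at $\rl^a$ is the arrow $id : \Omega^0 \times \rl^0 \times \rl^a \to \rl^a$ with $id(x_a) = x_a$ (Proposition \ref{proposition:df-category}). Its expectation integrates a constant against $\mu^0$ over the one-point space $\Omega^0 = *$, yielding the map $x_a \mapsto x_a$ with trivial parameter object $\rl^0 = *$, which is exactly the $\para(\eucmeas)$-identity at $\rl^a$.

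The heart of the argument is composition preservation, and essentially all of its content is already packaged into the hypothesis that $\cb$ is an Expectation Composition category. For $f : \Omega^n \times \rl^p \times \rl^a \to \rl^b$ and $f' : \Omega^m \times \rl^q \times \rl^b \to \rl^c$ in $\cb$, I would write out both sides of the desired equality $\expectation(f' \circ f) = \expectation f' \circ \expectation f$. On the left, the $\df$-composite satisfies $(f' \circ f)((\omega_m, \omega_n), (x_q, x_p), x_a) = f'(\omega_m, x_q, f(\omega_n, x_p, x_a))$ by Proposition \ref{proposition:df-category}, so $\expectation(f' \circ f)$ evaluates to $\int_{\Omega^{m+n}} f'(\omega_m, x_q, f(\omega_n, x_p, x_a))\, d\mu^{m+n}$. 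On the right, unwinding the $\para(\eucmeas)$-composition of Definition \ref{definition:para} for parameter objects $\rl^p$ and $\rl^q$ produces parameter object $\rl^q \times \rl^p = \rl^{q+p}$ and value $f'_E(x_q, f_E(x_p, x_a)) = \int_{\Omega^m} f'\bigl(\omega_m, x_q, \int_{\Omega^n} f(\omega_n, x_p, x_a)\, d\mu^n\bigr) d\mu^m$. These two expressions are precisely the left- and right-hand sides of Definition \ref{ExpectationClosureDef}, so they agree by the defining property of $\cb$, and the parameter objects match under $\rl^{q+p} = \rl^q \times \rl^p$.

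I expect the main obstacle to be not the composition step, which is immediate once both composites are expanded, but the well-definedness claim: confirming that $f_E$ stays infinitely differentiable, i.e.\ that differentiation and integration may be interchanged. This is the only point where genuine analysis enters, so it is where I would take care, checking the conditions (continuity and a local integrable domination of the partial derivatives of $f$) that license the interchange.
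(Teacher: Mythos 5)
Your proposal matches the paper's proof essentially step for step: identity-on-objects, well-definedness of $f_E$ via differentiation under the integral sign (the paper also invokes the Leibniz integration rule here), the trivial identity computation over $\Omega^0$, and composition preservation reduced directly to the defining equation of an Expectation Composition category. If anything, you are slightly more careful than the paper at the analytic step, since you note the domination conditions needed to justify the interchange and that morphisms of $\eucmeas$ must be \emph{infinitely} differentiable, whereas the paper simply asserts differentiability.
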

\begin{proof}\label{proof:expectation-functor}
To start, note that $\expectation$ trivially sends objects in $\cb$ to objects in $\para(\eucmeas)$. Next, note that for any morphism in $f: \Omega^n \times \rl^p \times \rl^a \rightarrow \rl^b$ in $\cb$ the
%
Leibniz integration rule implies that the following function is differentiable and
therefore also Borel measurable:
\begin{gather*}
    f_E: \rl^p \times \rl^a \rightarrow \rl^b 
    \\
    f_E(x_p, x_a) = E_{\mu^n}[f(\_,x_p, x_a)] =
    \int_{\omega_{n} \in \Omega^{n}}
    f(\omega_n,x_p, x_a) d\mu^n.
\end{gather*}
Therefore $\expectation$ sends morphisms in $\cb$ to morphisms in $\para(\eucmeas)$. Next, we can see that $\expectation$ preserves identities since $\expectation(id)$ is the identity function in $\para(\eucmeas)$
\begin{gather*}
    \expectation(id)(x_a) = E_{\mu^n}[id(\_, x_a)] = E_{\mu^n}[x_a] = x_a
\end{gather*}
Finally, consider the morphisms $f: \Omega^n \times \rl^p \times \rl^a \rightarrow \rl^b$ and $f': \Omega^m \times \rl^q \times \rl^b \rightarrow \rl^c$ in $\cb$. We have that for $x_q \in \rl^q, x_p \in \rl^p, x_a \in \rl^a$:
\begin{align*}
    \expectation (f' \circ f)(x_q, x_p, x_a) = \\
    %
    \int_{(\omega_{m}, \omega_{n}) \in \Omega^{m+n}}
    (f' \circ f)
    ((\omega_m, \omega_n), (x_q, x_p), x_a)
    d\mu^{m+n} = \\
    %
    \int_{(\omega_{m}, \omega_{n}) \in \Omega^{m+n}}
    f'(\omega_m, x_q, f(\omega_{n}, x_p, x_a))
    d\mu^{m+n} =^{*} \\
    %
    %
    \int_{\omega_{m} \in \Omega^{m}}
    f'\left(\omega_m, x_q, 
        \int_{\omega_{n} \in \Omega^{n}} f(\omega_{n}, x_p, x_a) d\mu^n
    \right)
    d\mu^{m} = \\
    %
    \expectation (f')(x_q, \expectation (f)(x_p, x_a))
\end{align*}
where the step marked with $*$ is by the definition of an Expectation Composition category. This implies that $\expectation$ preserves composition.
\end{proof}

\section{Likelihood and Learning}\label{section:likelihood}
In this section we will apply the maximum likelihood procedure to the arrows in $\df$ to derive the error function $er: \rl \times \rl \rightarrow \rl$. We will then use this error function to define a modification of Fong et al.'s \cite{fong2019backprop} backpropagation functor. However, since different arrows in $\df$ have likelihood functions of different forms, we will not define a single backpropagation functor out of $\df$. Instead, we will define multiple functors from subcategories of $\df$ into $\learn$.

To do this, we will first define a substructure of $\df$ with well-defined likelihood functions. Then, we will describe a class of subcategories of $\df$ derived from this substructure. Finally, we will define a backpropagation functor for any subcategory in this class. 

\subsection{Conditional Likelihood}
The conditional likelihood is a general measure of the goodness of fit of a set of parameters and observed data for a given parametric statistical model. We can define the conditional likelihood of a parametric statistical model $f: \Omega^n \times \rl^p \times \rl^a \rightarrow \rl^b$ over the probability space $\baseprobn$ at the points $x_p \in \rl^p, x_a \in \rl^a, x_b \in \rl^b$ in terms of the pushforward measure of $\mu^n$ along the random variable $f(\_, x_p, x_a)$. To do this, we evaluate the Radon-Nikodym derivative of the probability measure:
\begin{gather*}
    f(\_, x_p, x_a)_{*}\mu^n: \bc(\rl^b) \rightarrow [0,1]\\
    f(\_, x_p, x_a)_{*}\mu^n = \mu^n(f(\_, x_p, x_a)^{-1})
\end{gather*}
with respect to a reference measure at the point $x_b$. In this work we select the Lebesgue measure over $\rl^b$, $\lambda^b$, as the reference measure. 
Note that the Radon-Nikodym derivative with respect to the Lebesgue measure is not defined for all measures. For example, no discrete measure has a Radon-Nikodym derivative with respect to the Lebesgue measure, since for any finite collection of points $A$ in $\rl^b$, $\lambda^b(A) = 0$.

Formally
the conditional likelihood function for $f: \Omega^n \times \rl^p \times \rl^a \rightarrow \rl^b$ is:
\begin{gather*}
    L_f: \rl^p \times \rl^a \times \rl^b \rightarrow \rl
\end{gather*}
where for $x_p \in \rl^p, x_a \in \rl^a, x_b \in \rl^b$:
\begin{gather*}
    L_f(x_p, x_a, x_b) = \frac{d \ f(\_, x_p, x_a)_{*}\mu^n}{d\lambda^b}(x_b).
\end{gather*}
%
For example, the conditional likelihood function for the univariate linear regression model:
\begin{gather*}
    l: \Omega^n \times \rl^3 \times \rl \rightarrow \rl
\end{gather*}
that we introduced in Section \ref{df} is:
\begin{gather*}
    L_l: \rl^3 \times \rl \times \rl \rightarrow \rl
\end{gather*}
where for $[a,b,s] \in \rl^3, x \in \rl, y \in \rl$:
\begin{gather*}
    L_l([a,b,s], x, y) = 
    \frac{1}{s\sqrt{2\pi} }\exp\left(
    -\frac{(y - (ax + b))^2}{2s^2}
    \right).
\end{gather*}
%
\begin{definition}
An abstract conditional likelihood from $\rl^a$ to $\rl^b$ is a Borel-measurable and Lebesgue-integrable function of the form $L: \rl^p \times \rl^a \times \rl^b \rightarrow \rl$.
\end{definition}

%
We can define a semicategory $\cl$ of abstract conditional likelihoods.
\begin{proposition}\label{proposition-clsemicategory}
We can define a semicategory $\cl$ in which objects are spaces of the form $\rl^n$ for some $n \in \mathbb{N}$ and
the morphisms between $\rl^a$ and $\rl^b$ are equivalence classes of abstract conditional likelihood functions such that for $L, L^{*}: \rl^p \times \rl^a \times \rl^b \rightarrow \rl$ we have $L \sim L^{*}$ if for all $x_p \in \rl^p, x_a \in \rl^a$, the functions 
$L(x_p, x_a, \_): \rl^b \rightarrow \rl$ and $L^{*}(x_p, x_a, \_): \rl^b \rightarrow \rl$ are
$\lambda^b$-a.e. equivalent.

We define the composition
of these equivalence classes in terms of their representatives. That is, consider the equivalence classes $\mathbf{L}$ and $\mathbf{L'}$ and suppose $L_i: \rl^p \times \rl^a \times \rl^b \rightarrow \rl$ is in $\mathbf{L}$ and $L'_j: \rl^q \times \rl^b \times \rl^c \rightarrow \rl$ is in $\mathbf{L'}$. Then the representatives of $\mathbf{L'} \circ \mathbf{L}$ are:
\begin{gather*}
    (L'_j \circ L_i): \rl^{q+p}  \times \rl^a \times \rl^c  \rightarrow \rl
    \\
    (L'_j \circ L_i)((x_q, x_p), x_{a}, x_{c}) = \int_{x_b \in \rl^b} L'_j(x_q, x_b, x_c)L_i(x_p, x_a, x_b) dx_b.
\end{gather*}
for $L_i \in \mathbf{L}, L'_j \in \mathbf{L'}$. 
%
\end{proposition}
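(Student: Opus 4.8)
The plan is to verify the three semicategory axioms directly: that $\sim$ is an equivalence relation, that the composition formula descends to a well-defined operation on equivalence classes whose values are again abstract conditional likelihoods, and that composition is associative. Because $\cl$ is only a semicategory, no identity morphisms need be exhibited. First I would dispatch the equivalence relation: for each fixed $x_p, x_a$ the condition that $L(x_p, x_a, \_)$ and $L^{*}(x_p, x_a, \_)$ agree $\lambda^b$-a.e. is reflexive, symmetric, and transitive because $\lambda^b$-a.e. equality is, and since the clause is quantified separately over each pair $(x_p, x_a)$ no combining of null sets is required. Hence $\sim$ is an equivalence relation.

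Second, I would confirm that the composite $(L'_j \circ L_i)$ is itself an abstract conditional likelihood, i.e. Borel-measurable and Lebesgue-integrable. Joint Borel-measurability of the map $(x_q, x_p, x_a, x_c) \mapsto \int_{x_b \in \rl^b} L'_j(x_q, x_b, x_c)\, L_i(x_p, x_a, x_b)\, dx_b$ follows from the Fubini--Tonelli theorem applied to the jointly measurable integrand. For integrability in the output variable $x_c$ I would bound $\int_{x_c} |(L'_j \circ L_i)|\, dx_c$ by $\int_{x_b} |L_i(x_p, x_a, x_b)| \bigl( \int_{x_c} |L'_j(x_q, x_b, x_c)|\, dx_c \bigr) dx_b$ via Tonelli and invoke the integrability of the two factors in their respective output variables.

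Third, and this is the main obstacle, I would show that composition does not depend on the chosen representatives: given $L_i \sim L_k$ in $\mathbf{L}$ and $L'_j \sim L'_l$ in $\mathbf{L'}$, I must prove $(L'_j \circ L_i) \sim (L'_l \circ L_k)$. Replacing the first factor is easy, since for each fixed $x_c$ the integrands $L'_j(x_q, x_b, x_c) L_i(x_p, x_a, x_b)$ and $L'_j(x_q, x_b, x_c) L_k(x_p, x_a, x_b)$ differ only on the $\lambda^b$-null set where $L_i(x_p, x_a, \_) \neq L_k(x_p, x_a, \_)$, so the $x_b$-integrals coincide for every $x_c$ and $(L'_j \circ L_i) = (L'_j \circ L_k)$ pointwise. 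Replacing the second factor is delicate, because for each fixed $x_b$ we only know that $L'_j(x_q, x_b, \_) = L'_l(x_q, x_b, \_)$ off a $\lambda^c$-null set that may vary with $x_b$. To handle this I would set $B = \{(x_b, x_c) : L'_j(x_q, x_b, x_c) \neq L'_l(x_q, x_b, x_c)\}$ and apply Tonelli to its indicator: since every $x_b$-section of $B$ is $\lambda^c$-null, $B$ is $(\lambda^b \times \lambda^c)$-null, whence $\lambda^c$-almost every $x_c$-section of $B$ is $\lambda^b$-null. For such $x_c$ the two integrands agree for $\lambda^b$-a.e. $x_b$, so $(L'_j \circ L_k)$ and $(L'_l \circ L_k)$ agree, giving $\lambda^c$-a.e. equality. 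Chaining the two reductions yields well-definedness.

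Finally I would verify associativity for three composable representatives $L_i \in \mathbf{L}$, $L'_j \in \mathbf{L'}$, and a third $L''_k \in \mathbf{L''}$ from $\rl^c$ to $\rl^d$ with parameter $\rl^r$. Expanding both $((L''_k \circ L'_j) \circ L_i)$ and $(L''_k \circ (L'_j \circ L_i))$ through the defining integral produces, after a single application of Fubini's theorem to interchange the $x_b$- and $x_c$-integrations, the same iterated integral of $L''_k(x_r, x_c, x_d)\, L'_j(x_q, x_b, x_c)\, L_i(x_p, x_a, x_b)$; the interchange is licensed by the integrability established in the second step. The two groupings therefore agree pointwise on representatives, so composition is associative and $\cl$ is a semicategory.
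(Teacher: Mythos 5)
Your proposal is correct and follows essentially the same route as the paper's proof: the crux in both is the well-definedness of composition under change of the second representative, handled via the same Fubini--Tonelli section argument on the null set $\{(x_b,x_c) : L'_j(x_q,x_b,x_c) \neq L'_l(x_q,x_b,x_c)\} \subseteq \rl^{b+c}$ (the paper phrases it as a contradiction on the set $\sigma_c$ of bad outputs, you argue directly that almost every $x_c$-section is $\lambda^b$-null), with associativity in both cases by expanding the iterated integrals and interchanging via Fubini. Your two-step replacement of representatives and your explicit integrability bound for closure are minor organizational refinements of, not departures from, the paper's argument.
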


\begin{proof}\label{proof-clsemicategory}
We need to show that $\cl$ is closed under composition and that composition in $\cl$ is associative. 

To start, note that if $L_i: \rl^p \times \rl^a \times \rl^b \rightarrow \rl$ and $L'_j: \rl^q \times \rl^b \times \rl^c \rightarrow \rl$ are abstract conditional likelihood functions then their composition is also an abstract conditional likelihood function since:
\begin{gather*}
    (L'_j \circ L_i): \rl^{q+p}  \times \rl^a \times \rl^c  \rightarrow \rl
    \\
    (L'_j \circ L_i)((x_q, x_p), x_{a}, x_{c}) = \int_{x_b \in \rl^b} L'_j(x_q, x_b, x_c)L_i(x_p, x_a, x_b) dx_b.
\end{gather*}
%
is also Borel-measurable and Lebesgue integrable.

Next, we need to show that for any pair of equivalence classes:
\begin{gather*}
    \mathbf{L}: \rl^a \rightarrow \rl^b
    \qquad
    \mathbf{L'}: \rl^b \rightarrow \rl^c
\end{gather*}
and choice of representatives:
\begin{gather*}
    L_1: \rl^p \times \rl^a \times \rl^b \rightarrow \rl
    \qquad
    L_2: \rl^p \times \rl^a \times \rl^b \rightarrow \rl
\end{gather*}
in $\mathbf{L}$ and:
\begin{gather*}
    L'_1: \rl^q \times \rl^b \times \rl^c \rightarrow \rl
    \qquad
    L'_2: \rl^q \times \rl^b \times \rl^c \rightarrow \rl
\end{gather*}
in $\mathbf{L'}$ we have that for any $x_q \in \rl^q, x_p \in \rl^p, x_a \in \rl^a$ the functions:
\begin{align*}
    &(L'_1 \circ L_1)((x_q, x_p), x_a, \_): \rl^c \rightarrow \rl
    \\
    &(L'_2 \circ L_2)((x_q, x_p), x_a, \_): \rl^c \rightarrow \rl
\end{align*}
are $\lambda^c$-a.e. equivalent.

Define $\sigma_c$ to be the set of all $x_c \in \rl^c$ where:
\begin{gather*}
    (L'_1 \circ L_1)((x_q, x_p), x_a, x_c) \neq (L'_2 \circ L_2)((x_q, x_p), x_a, x_c)
\end{gather*}
We need to show that $\sigma_c$ has Lebesgue measure $0$. For any $x_c \in \rl^c$, define $\sigma_b(x_c)$ to be the union of the following subsets of $\rl^b$:
\begin{gather*}
    \sigma_b(x_c) = 
    \{x_b \ | \ L_1(x_p, x_a, x_b) \neq L_2(x_p, x_a, x_b)\}
    \cup 
    \{x_b \ | \ L'_1(x_q, x_b, x_c) \neq L'_2(x_q, x_b, x_c)\}
\end{gather*}
Now for any $x_c$ where the Lebesgue measure of $\sigma_b(x_c)$ is $0$ we have the following:
%
\begin{align*}
    (L'_1 \circ L_1)((x_q, x_p), x_a, x_c) = \\
    \int_{x_b \in \rl^b} L'_1(x_q, x_b, x_c)L_1(x_p, x_a, x_b) dx_b
    = \\
    %
    \int_{x_b \in \rl^b - \sigma_b(x_c)} L'_1(x_q, x_b, x_c)L_1(x_p, x_a, x_b) dx_b
    + 
    \int_{x_b \in \sigma_b(x_c)} L'_1(x_q, x_b, x_c)L_1(x_p, x_a, x_b) dx_b = \\
    %
    \int_{x_b \in \rl^b - \sigma_b(x_c)} L'_1(x_q, x_b, x_c)L_1(x_p, x_a, x_b) dx_b = \\
    %
    %
    \int_{x_b \in \rl^b - \sigma_b(x_c)} L'_2(x_q, x_b, x_c)L_2(x_p, x_a, x_b) dx_b = \\
    %
    \int_{x_b \in \rl^b - \sigma_b(x_c)} L'_2(x_q, x_b, x_c)L_2(x_p, x_a, x_b) dx_b
    + 
    \int_{x_b \in \sigma_b(x_c)} L'_2(x_q, x_b, x_c)L_2(x_p, x_a, x_b) dx_b = \\
    %
    \int_{x_b \in \rl^b} L'_2(x_q, x_b, x_c)L_2(x_p, x_a, x_b) dx_b = \\
    (L'_2 \circ L_2)((x_q, x_p), x_a, x_c)
\end{align*}
Therefore, for any $x_c \in \sigma_c$ it must be that the Lebesgue measure of $\sigma_b(x_c)$ is greater than $0$. 

Since $L_1, L_2$ are representatives of the same equivalence class it must be that the set:
\begin{gather*}
    \{x_b \ | \ L_1(x_p, x_a, x_b) \neq L_2(x_p, x_a, x_b)\} \subseteq \rl^b
\end{gather*}
always has Lebesgue measure $0$, and therefore $\sigma_c$ is equal to the set of all $x_c$ for which the set:
\begin{gather*}
    \{x_b \ | \ L'_1(x_q, x_b, x_c) \neq L'_2(x_q, x_b, x_c)\} \subseteq \rl^b
\end{gather*}
has Lebesgue measure greater than $0$.

Now suppose for contradiction that the set $\sigma_c$ has Lebesgue measure greater than $0$. Then the set:
\begin{gather*}
    \{(x_b, x_c) \ | \ L'_1(x_q, x_b, x_c) \neq L'_2(x_q, x_b, x_c)\} \subseteq \rl^{b+c}
\end{gather*}
must have Lebesgue measure greater than $0$ as well. However, this is impossible
since $L'_1, L'_2$ are representatives of the same equivalence class and therefore for any fixed $x_b \in \rl^b$ the set:
\begin{gather*}
    \{x_c \ | \ L'_1(x_q, x_b, x_c) \neq L'_2(x_q, x_b, x_c)\} \subseteq \rl^c
\end{gather*}
must have Lebesgue measure equal to $0$. Therefore  $\sigma_c$ has Lebesgue measure equal to $0$.

%
%
%
%
%
%
Therefore since $\sigma_c$ has Lebesgue measure $0$ we can conclude that:
\begin{align*}
    &(L'_1 \circ L_1)((x_q, x_p), x_a, \_): \rl^c \rightarrow \rl
    \\
    &(L'_2 \circ L_2)((x_q, x_p), x_a, \_): \rl^c \rightarrow \rl
\end{align*}
are $\lambda^c$-a.e. equivalent and $\cl$ is closed under composition.

Next,
we need to show that composition is associative. Suppose the following are representatives of three arrows in $\cl$:
\begin{gather*}
    f_1: \rl^{p_1} \times \rl^a \times \rl^b \rightarrow \rl
    \\
    f_2: \rl^{p_2} \times \rl^b \times \rl^c \rightarrow \rl
    \\
    f_3: \rl^{p_3} \times \rl^c \times \rl^d \rightarrow \rl
\end{gather*}
Now consider the representatives of their composition:
\begin{align*}
    f_3\circ (f_2 \circ f_1): \rl^a \rightarrow \rl^d
    \\
    (f_3\circ f_2) \circ f_1: \rl^a \rightarrow \rl^d
\end{align*}
For $x_{p_3} \in \rl^{p_3}, x_{p_2} \in \rl^{p_2}, x_{p_1} \in \rl^{p_1}, x_a \in \rl^a, x_d \in \rl^d$ we then have:
\begin{align*}
    (f_3\circ (f_2 \circ f_1))((x_{p_3}, x_{p_2}, x_{p_1}), x_a, x_d) = \\
    \int_{x_c \in \rl^c}
    f_3(x_{p_3}, x_c, x_d)
    \left(\int_{x_b \in \rl^b}
    f_2(x_{p_2}, x_b, x_c)
    f_1(x_{p_1}, x_a, x_b) dx_b\right) dx_c = \\
    %
    \int_{x_c \in \rl^c}
    \int_{x_b \in \rl^b}
    f_3(x_{p_3}, x_c, x_d)
    f_2(x_{p_2}, x_b, x_c)
    f_1(x_{p_1}, x_a, x_b)  dx_b dx_c = \\
    %
    \int_{x_b \in \rl^b}
    \left(\int_{x_c \in \rl^c}
    f_3(x_{p_3}, x_c, x_d)
    f_2(x_{p_2}, x_b, x_c) 
    dx_c\right)
    f_1(x_{p_1}, x_a, x_b) dx_b   = \\
    ((f_3\circ f_2) \circ f_1) ((x_{p_3}, x_{p_2}, x_{p_1}), x_a, x_d)
\end{align*}
Therefore, composition in $\cl$ is associative, so $\cl$ is a semicategory.
\end{proof}
Note that $\cl$ does not form a category because objects in $\cl$ do not necessarily have identities. For example, for $b>0$ there is no function $\delta_b: \rl^0 \times \rl^b \times \rl^b \rightarrow \rl$ such that the following holds for all $L: \rl^p \times \rl^a \times \rl^b \rightarrow \rl$ and $x_p \in \rl^p, x_a \in \rl^a, x_b \in \rl^b$:
\begin{gather*}
    (\delta_b \circ L): \rl^p \times \rl^a \times \rl^b \rightarrow \rl
    \\
    (\delta_b \circ L)(x_{p}, x_{a}, x_{b}) = 
    \int_{x'_b \in \rl^b} \delta_b(x_b, x'_b)L(x_p, x_a, x'_b) dx'_b =
    L(x_p, x_a, x_b).
\end{gather*}

If we extend from functions to generalized functions (distributions) we can form a category similar to $\cl$. For example, Blute \cite{blute2007conformal} defines a category $\mathbf{DRel}$ of tame distributions in which the Dirac delta $\delta$ exists as a singular distribution. The semicategory $\cl$ is similar in spirit to the nuclear ideal of $\mathbf{DRel}$ that Blute et al. describe. However, we will use conditional likelihood functions to define optimization objectives, and there is no obvious way to do this with a singular distribution. For this reason we will keep $\cl$ as a semicategory.

Next, given a probability space $\baseprob$ define $\dfr$ to be the substructure of $\df$ with the same objects, but with morphisms between $\rl^a$ and $\rl^b$ limited to $f: \Omega^n \times \rl^p \times \rl^a \rightarrow \rl^b$ such that the following Borel-measurable and Lebesgue-integrable function exists:
\begin{gather*}
    L_f: \rl^p \times \rl^a \times \rl^b \rightarrow \rl
    \\
    L_f(x_p, x_a, x_b) = \frac{d \ f(\_, x_p, x_a)_{*}\mu^n}{d\lambda^b}(x_b)
\end{gather*}
That is, we have:
\begin{gather*}
    f(\_, x_p, x_a)_{*}\mu^{n}(\sigma_b) =
    \int_{x_b \in \sigma_b} L_f(x_p, x_a, x_b) 
    d\lambda^b
\end{gather*}
\begin{proposition}\label{proposition-RadonNikodymClosed}
$\dfr$ is a semicategory.
\end{proposition}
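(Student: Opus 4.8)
The plan is to verify the two semicategory axioms for $\dfr$: closure under composition and associativity. Associativity is immediate, since $\dfr$ inherits its composition from $\df$, and composition in $\df$ was already shown to be associative in Proposition~\ref{proposition:df-category}. The entire content of the argument therefore lies in showing closure: given arrows $f: \Omega^n \times \rl^p \times \rl^a \rightarrow \rl^b$ and $f': \Omega^m \times \rl^q \times \rl^b \rightarrow \rl^c$ for which the conditional likelihoods $L_f$ and $L_{f'}$ exist, I must exhibit the conditional likelihood $L_{f' \circ f}$ of their $\df$-composite, i.e.\ show that the pushforward of $\mu^{m+n}$ along $(f' \circ f)(\_, (x_q, x_p), x_a)$ is absolutely continuous with respect to $\lambda^c$.

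First I would fix parameters $x_p \in \rl^p$, $x_q \in \rl^q$ and set $g = f(\_, x_p, \_): \Omega^n \times \rl^a \rightarrow \rl^b$ and $g' = f'(\_, x_q, \_): \Omega^m \times \rl^b \rightarrow \rl^c$. These partial applications are exactly $\peuc$-arrows, and unwinding the $\df$-composition formula shows that $(f' \circ f)(\_, (x_q, x_p), \_)$ coincides with the $\peuc$-composite $g' \circ g$. This lets me invoke the pushforward functor $\push$ of Proposition~\ref{PushFunctor}: the pushforward of $\mu^{m+n}$ along $g' \circ g$ equals the $\bstoch$-composite of the pushforwards of $g$ and $g'$, so for any $\sigma_c \in \bc(\rl^c)$ it is $\int_{x_b \in \rl^b} \push g'(x_b, \sigma_c)\, d\,\push g(x_a, \_)$.

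Next I would feed in the hypotheses $f, f' \in \dfr$. Absolute continuity of $\push g(x_a, \_)$ means $d\,\push g(x_a, \_) = L_f(x_p, x_a, x_b)\, d\lambda^b$, while absolute continuity of $\push g'(x_b, \_)$ gives $\push g'(x_b, \sigma_c) = \int_{\sigma_c} L_{f'}(x_q, x_b, x_c)\, d\lambda^c$. Substituting both into the composite and swapping the order of integration via Tonelli's theorem, I would obtain that the pushforward of $\mu^{m+n}$ along the composite equals $\int_{\sigma_c} \bigl(\int_{\rl^b} L_{f'}(x_q, x_b, x_c)\, L_f(x_p, x_a, x_b)\, d\lambda^b\bigr)\, d\lambda^c$. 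The inner integral is precisely the $\cl$-composition $(L_{f'} \circ L_f)$ of Proposition~\ref{proposition-clsemicategory}, so the pushforward measure is absolutely continuous with respect to $\lambda^c$ with this function as its density. Hence $L_{f' \circ f}$ exists (and equals $L_{f'} \circ L_f$), placing $f' \circ f$ in $\dfr$.

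The main obstacle I anticipate is the measure-theoretic bookkeeping around the order swap and the identification of the resulting function as a genuine Radon--Nikodym derivative, rather than any single hard estimate. Concretely, I need the integrand $(x_b, x_c) \mapsto L_{f'}(x_q, x_b, x_c)\, L_f(x_p, x_a, x_b)$ to be jointly Borel-measurable and nonnegative so that Tonelli applies without a prior integrability check, and I need the representation $\int_{\sigma_c}(\cdots)\, d\lambda^c$ for every Borel $\sigma_c$ to certify absolute continuity and pin down the density $\lambda^c$-a.e. Since conditional likelihoods are densities of probability measures, nonnegativity and the finiteness Tonelli requires are automatic, and joint measurability follows from the Borel-measurability built into the definition of abstract conditional likelihoods; this is what makes the argument go through cleanly.
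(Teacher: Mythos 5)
Your proof is correct, and it reaches the same density formula as the paper's, but by a genuinely more modular route. The paper proves closure by an inline chain of integral identities: it expands the pushforward of the composite as an integral of Dirac kernels, applies Fubini to split $\mu^{m+n}$ into $\mu^m$ and $\mu^n$, inserts the intermediate integration over $x_b$, and arrives at $\int_{x_b \in \rl^b} f'(\_,x_q,x_b)_{*}\mu^m(\sigma_c)\, df(\_,x_p,x_a)_{*}\mu^n$ --- in effect re-deriving, inside this proof, the composition-preservation identity that is the content of Proposition~\ref{PushFunctor}; only then does it substitute the Radon--Nikodym derivatives and swap the $x_b$ and $x_c$ integrals. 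You instead observe that partial application at fixed $(x_q, x_p)$ turns the $\df$-composite into a $\peuc$-composite and cite Proposition~\ref{PushFunctor} to obtain that same identity in one step, so that all that remains is the density substitution and the Tonelli swap. What your route buys is reuse (no duplicated Fubini/Dirac manipulation) and a slightly sharper justification of the interchange: the paper swaps the order of integration silently, whereas you note that a.e.\ nonnegativity of the densities (they are densities of probability measures) and the joint Borel measurability built into the definition of abstract conditional likelihoods make Tonelli applicable without a prior integrability check. The one extra obligation your route incurs --- checking that partial application sends $\df$-arrows to $\peuc$-arrows and intertwines the two compositions --- is immediate from the composition formulas, since $(f' \circ f)((\omega_m,\omega_n),(x_q,x_p),x_a) = f'(\omega_m,x_q,f(\omega_n,x_p,x_a))$ coincides with the $\peuc$-composite of the partial applications, and smoothness is preserved by fixing the parameter slot. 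Both treatments handle associativity identically (inherited from $\df$ via Proposition~\ref{proposition:df-category}), and your closing identification of $L_{f' \circ f}$ with the $\cl$-composition $L_{f'} \circ L_f$ is exactly the computation the paper later redoes to establish the semifunctor $\rn$ in Proposition~\ref{RadonNikodymFunctor}, so your packaging anticipates that result as well.
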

\begin{proof}\label{proof-RadonNikodymClosed}
Since composition in $\dfr$ is the same as in $\df$ we simply need to show that $\dfr$ is closed under $\df$-composition.

Suppose $f: \Omega^n \times \rl^p \times \rl^a \rightarrow \rl^b$ and $f': \Omega^m \times \rl^q \times \rl^b \rightarrow \rl^c$ are arrows in $\dfr$. We can show that for all $x_q \in \rl^q, x_p \in \rl^p, x_a \in \rl^a$ there exists some Borel-measurable and Lebesgue integrable $g: \rl^c \rightarrow \rl$ such that for $\sigma_{c} \in \bc(\rl^c)$:
\begin{gather*}
    (f' \circ f)(\_, (x_q, x_p), x_a)_{*}\mu^{m+n}: \bc(\rl^c) \rightarrow [0,1]
    \\
    (f' \circ f)(\_, (x_q, x_p), x_a)_{*}\mu^{m+n}(\sigma_c) =
    \int_{x_c \in \sigma_c} g(x_c) d\lambda^c
\end{gather*}
where $\lambda^c$ is the Lebesgue measure over $\rl^c$:
\begin{align*}
    (f' \circ f)(\_, (x_q, x_p), x_a)_{*}\mu^{m+n}(\sigma_c) = \\
    %
    %
    \int_{(\omega_m,\omega_n) \in \Omega^m \times \Omega^n}
    \delta((f' \circ f)((\omega_m, \omega_n), (x_q, x_p), x_a), \sigma_c) d\mu^{n+m} = \\
    %
    \int_{\omega_m \in \Omega^m}
    \int_{\omega_n \in \Omega^n}
    \delta(f'(\omega_m, x_q, f(\omega_n, x_p, x_a)), \sigma_c) d\mu^n d\mu^m = \\
    %
    %
    \int_{x_b \in \rl^b}
    \int_{\omega_m \in \Omega^m}
    \int_{\omega_n \in \Omega^n}
    \delta(f'(\omega_m, x_q, x_b), \sigma_c) d\delta(f(\omega_n, x_p, x_a), \_) d\mu^n d\mu^m  = \\
    %
    \int_{x_b \in \rl^b}
    \left[
    \int_{\omega_m \in \Omega^m}
    \delta(f'(\omega_m, x_q, x_b), \sigma_c)
    d\mu^m
    \right]
    d\left[
    \int_{\omega_n \in \Omega^n}
    \delta(f(\omega_n, x_p, x_a), \_)
    d\mu^n
    \right] =
    \\
    %
    \int_{x_b \in \rl^b}
    f'(\_, x_q, x_b)_{*}\mu^m(\sigma_c)\ 
    df(\_, x_p, x_a)_{*}\mu^n = \\
    %
    \int_{x_b \in \rl^b}
    \left[\int_{x_c \in \sigma_c} \frac{d f'(\_, x_q, x_b)_{*}\mu^m}{d\lambda^c}(x_c) d\lambda^c\right]
    \left[\frac{d f(\_, x_p, x_a)_{*}\mu^n}{d\lambda^b}(x_b) d\lambda^b \right]
    = \\
    %
    \int_{x_c \in \sigma_c} 
    \left[
        \left(
        \int_{x_b \in \rl^b}
        \frac{d f'(\_, x_q, x_b)_{*}\mu^m}{d\lambda^c}(x_c)
        \right)
        \left(
        \frac{d f(\_, x_p, x_a)_{*}\mu^n}{d\lambda^b}(x_b)
        d\lambda^b
        \right)
    \right]
     d\lambda^c.
\end{align*}
Therefore we have that:
\begin{align*}
    L_{f' \circ f}((x_q, x_p), x_a, x_c) = \\
    \frac{d(f' \circ f)(\_, (x_q, x_p), x_a)_{*}\mu^{m+n}}{\lambda^c}(x_c) = \\
    \int_{x_b \in \rl^b}
    \left(
    \frac{d f'(\_, x_q, x_b)_{*}\mu^m}{d\lambda^c}(x_c)
    \right)
    \left(
    \frac{d f(\_, x_p, x_a)_{*}\mu^n}{d\lambda^b}(x_b)
    \right)
    d\lambda^b = \\
    \int_{x_b \in \rl^b}
    L_{f'}(x_q, x_b, x_c)
    L_f(x_p, x_a, x_b)
    d\lambda^b 
\end{align*}
%
and $L_{f' \circ f}$ is therefore Lebesgue integrable and Borel measurable since $L_{f'}$ and $L_f$ are Lebesgue integrable and Borel measurable.

\end{proof}


\begin{proposition}\label{RadonNikodymFunctor}

We can define a semifunctor $\rn: \dfr \rightarrow \cl$ that acts as the identity on objects and sends any morphism $f: \Omega^n \times \rl^p \times \rl^a \rightarrow \rl^b$ in $\dfr$ to the equivalence class that contains the function:
\begin{gather*}
    L_f: \rl^p \times \rl^a \times \rl^b \rightarrow \rl
    \\
    L_f(x_p, x_a, x_b) = \frac{d f(\_, x_p, x_a)_{*}\mu^n}{d\lambda^b}(x_b).
\end{gather*}
\end{proposition}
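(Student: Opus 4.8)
The plan is to verify the two conditions defining a semifunctor: that $\rn$ sends morphisms of $\dfr$ to well-defined morphisms of $\cl$, and that it preserves composition. Because $\cl$ and $\dfr$ are only semicategories (Propositions \ref{proposition-clsemicategory} and \ref{proposition-RadonNikodymClosed}), I do not need to check preservation of identities, which is exactly why a semifunctor rather than a functor is the right target.

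First I would dispatch well-definedness. On objects $\rn$ is the identity, and the objects $\rl^n$ of $\dfr$ are precisely the objects of $\cl$, so there is nothing to check there. For morphisms, by the very definition of $\dfr$ every arrow $f: \Omega^n \times \rl^p \times \rl^a \rightarrow \rl^b$ comes equipped with a Borel-measurable and Lebesgue-integrable function $L_f$, which is exactly an abstract conditional likelihood from $\rl^a$ to $\rl^b$; hence $L_f$ determines a morphism of $\cl$. The one subtlety is that a Radon-Nikodym derivative is unique only up to $\lambda^b$-a.e. equivalence, so $L_f$ is really a family of admissible candidates rather than a single function. I would resolve this by noting that the equivalence relation defining the morphisms of $\cl$ is precisely $\lambda^b$-a.e. equivalence in the output coordinate, so any two admissible versions of the derivative fall into the same equivalence class and $\rn(f)$ is well-defined independently of the choice of representative.

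The core step is preservation of composition, and here the computation has already been carried out inside the proof of Proposition \ref{proposition-RadonNikodymClosed}. There it is established that for arrows $f: \Omega^n \times \rl^p \times \rl^a \rightarrow \rl^b$ and $f': \Omega^m \times \rl^q \times \rl^b \rightarrow \rl^c$ in $\dfr$ one has
\begin{gather*}
    L_{f' \circ f}((x_q, x_p), x_a, x_c) =
    \int_{x_b \in \rl^b}
    L_{f'}(x_q, x_b, x_c)\, L_f(x_p, x_a, x_b)\, d\lambda^b.
\end{gather*}
I would simply observe that the right-hand side is, by definition, a representative of the $\cl$-composition $\rn(f') \circ \rn(f)$ (Proposition \ref{proposition-clsemicategory}). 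Since composition in $\cl$ was shown to be independent of the choice of representatives, this equality of representatives upgrades to the equality of equivalence classes $\rn(f' \circ f) = \rn(f') \circ \rn(f)$, which is exactly what preservation of composition requires.

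The main obstacle, such as it is, lies not in the composition identity itself but in the bookkeeping around equivalence classes: one must confirm that passing to the Radon-Nikodym derivative genuinely lands in a single well-defined class of $\cl$, and that the formula inherited from Proposition \ref{proposition-RadonNikodymClosed} is literally the composition rule of $\cl$ rather than merely resembling it. Both points reduce to matching the $\lambda^b$-a.e. equivalence built into $\cl$ with the inherent a.e. non-uniqueness of the derivative, so the argument is short and is largely a matter of correctly invoking the two preceding propositions.
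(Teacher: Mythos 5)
Your proposal is correct and takes essentially the same approach as the paper: the paper's own proof also rests on the identity $L_{f' \circ f}((x_q, x_p), x_a, x_c) = \int_{x_b \in \rl^b} L_{f'}(x_q, x_b, x_c)\, L_f(x_p, x_a, x_b)\, d\lambda^b$, except that it re-derives this computation in full rather than citing Proposition \ref{proposition-RadonNikodymClosed} as you do. Your explicit verification that $\rn(f)$ is independent of the choice of version of the Radon--Nikodym derivative (since any two versions agree $\lambda^b$-a.e.\ for each fixed $(x_p, x_a)$ and hence lie in the same $\cl$-equivalence class) is a point the paper leaves implicit, and you handle it correctly.
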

\begin{proof}
To start, note that $\rn$ maps objects in $\dfr$ to objects in $\cl$ by definition. Next, Proposition \ref{proposition-RadonNikodymClosed} implies that for each morphism $f \in \dfr$ the function
\begin{gather*}
    L_f: \rl^p \times \rl^a \times \rl^b \rightarrow \rl
    \\
    L_f(x_p, x_a, x_b) = \frac{d f(\_, x_p, x_a)_{*}\mu^n}{d\lambda^b}(x_b).
\end{gather*}
exists and therefore $\rn$ maps morphisms in $\dfr$ to morphisms in $\cl$.

Now we will show that $\rn$ preserves composition. Suppose
\begin{align*}
    & f: \Omega^n \times \rl^p \times \rl^a \rightarrow \rl^b
    \\
    & f': \Omega^m \times \rl^q \times \rl^b \rightarrow \rl^c
\end{align*}
are arrows in $\dfr$. Then for any:
\begin{align*}
    & x_q \in \rl^q \qquad x_p \in \rl^p \\
    & x_a \in \rl^a \qquad x_c \in \rl^c
\end{align*}
we have:
\begin{gather*}
    \rn (f' \circ f): \rl^{q+p} \times \rl^a \times \rl^c \rightarrow \rl
\end{gather*}
\begin{align*}
    \rn (f' \circ f) ((x_q, x_p), x_a, x_c)) = \\
    %
    \frac{d
        (f' \circ f)(\_, (x_q, x_p), x_a)_{*}\mu^{m+n}
    }{d\lambda^c}(x_c) = \\
    %
    \frac{d \int_{x_b \in \rl^b}
        f'(\_, x_q, x_b)_{*}\mu^m((\_)_c)\ \ 
        df(\_, x_p, x_a)_{*}\mu^n
    }{d\lambda^c}(x_c) = \\
    %
    \frac{d \int_{x_b \in \rl^b}
        \left[\int_{x'_c \in ((\_)_c)} \frac{d f'(\_, x_q, x_b)_{*}\mu^m}{d\lambda^c}(x'_c) d\lambda^c \right]
        df(\_, x_p, x_a)_{*}\mu^n
    }{d\lambda^c}(x_c) = \\
    %
    \frac{d \int_{x_b \in \rl^b}
        \left[\int_{x'_c \in ((\_)_c)} \frac{d f'(\_, x_q, x_b)_{*}\mu^m}{d\lambda^c}(x'_c) d\lambda^c \right]
        \left[
        \frac{d f(\_, x_p, x_a)_{*}\mu^n)}{d\lambda^b}(x_b) d\lambda^b
        \right]
    }{d\lambda^c}(x_c) = \\
    \frac{d \int_{x'_c \in (\_)_c}  \left[
        \int_{x_b \in \rl^b}
        \frac{d f'(\_, x_q, x_b)_{*}\mu^m}{d\lambda^c}(x'_c) \ \ 
        \frac{d f(\_, x_p, x_a)_{*}\mu^n)}{d\lambda^b}(x_b) d\lambda^b
        \right] d\lambda^c
    }{d\lambda^c}(x_c) = \\
    \int_{x_b \in \rl^b}
    \frac{d f'(\_, x_q, x_b)_{*}\mu^m}{d\lambda^c}(x_c)\ \ 
    \frac{d f(\_, x_p, x_a)_{*}\mu^n)}{d\lambda^b}(x_b) d\lambda^b = \\
    %
    (\rn f' \circ \rn f)((x_q, x_p), x_a, x_c).
\end{align*}
\end{proof}

\subsection{Maximum Likelihood}
Suppose we have a probability space $(\rl^{a} \times \rl^{b}, \bc(\rl^{a} \times \rl^{b}), \tau)$.
%
%
The maximum expected log-likelihood estimator for $f$ with respect to $\tau$ is the vector $x_p \in \rl^p$ that maximizes the following function (note that $log$ is a monotonic transformation and we just use it to make the math easier - the optimal value of $x_p$ is the same with or without it):
\begin{gather*}
    L_{\tau}: \rl^p \rightarrow \rl
    \\
    L_{\tau}(x_p) = \int_{(x_a, x_b)\in \rl^{a} \times \rl^{b}} 
    log 
    \frac{d f(\_,x_p, x_{a})_{*}\mu^n}{d\lambda^b}(x_{b})
    d\tau.
\end{gather*}
That is, the maximum expected log-likelihood estimator for $f$ with respect to $\tau$ is the vector $x_p$ that maximizes the expected value of:
\begin{align*}
 log \frac{d f(\_,x_p, x_{a})_{*}\mu^n}{d\lambda^b}(x_{b})
\end{align*}
over $\tau$.

Now suppose that instead of observing a probability space $(\rl^{a} \times \rl^{b}, \bc(\rl^{a} \times \rl^{b}), \tau)$ directly we have a dataset of samples:
\begin{align*}
    S_n = \{(x_{a_1}, x_{b_1}), (x_{a_2}, x_{b_2}), \cdots, (x_{a_n}, x_{b_n})\}
\end{align*}
in $\rl^a \times \rl^b$.
\begin{definition}
The maximum log likelihood estimator for $f$ with respect to the samples:
\begin{gather*}
    (x_{a_1}, x_{b_1}), (x_{a_2}, x_{b_2}), \cdots, (x_{a_n}, x_{b_n}) \in \rl^a \times \rl^b
\end{gather*}
is the vector $x_p \in \rl^p$ that maximizes the function:
\begin{gather*}
    L_{S_n}(x_p): \rl^p \rightarrow \rl
    \\
    L_{S_n}(x_p) = \sum_{i=1}^n log \frac{d f(\_,x_p, x_{a_i})_{*}\mu^n}{d\lambda^b}(x_{b_i}).
\end{gather*}
\end{definition}
%
Note that if we assume the samples in $S_n$ are drawn from $(\rl^{a} \times \rl^{b}, \bc(\rl^{a} \times \rl^{b}), \tau)$, then by the weak law of large numbers
$\frac{1}{n} L_{S_n}$ converges to $L_{\tau}$ in probability as $n \rightarrow \infty$.

However, it will be challenging to derive an objective function for Fong et al's \cite{fong2019backprop} backpropagation functor from $L_{S_n}$ directly, since their construction assumes that the error function has the signature $er: \rl \times \rl \rightarrow \rl$ and has an invertible derivative. We will slightly modify $L_{S_n}$ to make this easier.

For any $j \leq b$, the $j$th component of $f: \Omega^n \times \rl^p \times \rl^a \rightarrow \rl^b$ is the function:
\begin{align*}
f[j]: \Omega^n \times \rl^p \times \rl^a \rightarrow \rl
\end{align*}
and the marginal likelihood at $x_p \in \rl^p$ of this component for some sample $(x_{a_i}, x_{b_i}) \in S_n$ is:
\begin{gather*}
    l_{ij}: \rl^p \rightarrow \rl
    \\
    l_{ij}(x_p) = \frac{d f(\_,x_p, x_{a_i})[j]_{*}\mu^n}{d\lambda}(x_{b_i}[j])
\end{gather*}
where we write $x_{b_i}[j] \in \rl$ for the $j$th component of the vector $x_{b_i} \in \rl^b$. We can now define the following:

\begin{definition}
The maximum log-marginal likelihood estimator for $f$ with respect to the samples:
\begin{gather*}
    (x_{a_1}, x_{b_1}), (x_{a_2}, x_{b_2}), \cdots, (x_{a_n}, x_{b_n}) \in \rl^a \times \rl^b
\end{gather*}
is the vector $x_p \in \rl^p$ that maximizes the function:
\begin{gather*}
    M_{S_n}: \rl^p \rightarrow \rl
    \\
    M_{S_n}(x_p) = \sum_{i=1}^n\sum_{j=1}^b log\  l_{ij}(x_p).
\end{gather*}
where $l_{ij}(x_p)$ is the marginal likelihood at $x_p \in \rl^p$ of the $j$th component of $f$ for $(x_{a_i}, x_{b_i}) \in S_n$.
\end{definition}
Note that $M_{S_n}(x_p) = L_{S_n}(x_p)$ when the real-valued random variables:
\begin{align*}
    f(\_,x_p, x_{a_i})[j]: \Omega^n \rightarrow \rl
\end{align*}
are mutually independent for all $x_{a_i}$ and $j \leq b$.

This suggests a criterion for an error function $er: \rl \times \rl \rightarrow \rl$ over which we can define Fong et al.'s \cite{fong2019backprop} backpropagation functor: we want the following two real-valued functions of $\rl^p$ to move in tandem for any fixed $(x_a,y) \in \rl^a \times \rl$ and $j \leq b$:
\begin{align*}
    & l(x_p) =  er\left(E_{\mu^n}[f(\_,x_p, x_a)[j]], y\right)
    \\
    & l'(x_p) = -\frac{d f(\_,x_p, x_a)[j]_{*}\mu^n}{d\lambda}(y).
\end{align*}
We will now make this formal. 

\subsection{Learning from Likelihoods}


Suppose we have a real-valued random variable $f$ over the probability space $\baseprobn$. Write $E_{\mu^n}[f] \in \rl$ for the expectation of $f$ over $\mu^n$:
\begin{gather*}
    E_{\mu^n}[f] = \int_{\omega_n \in \Omega^n} f(\omega_n) \ d\mu^n.
\end{gather*}
And define $f^{0}$ to be:
\begin{gather*}
    f^{0}(\omega_n) = f(\omega_n) - E_{\mu^n}[f].
\end{gather*}
Next,
suppose $U: \mathbf{Cat} \rightarrow \mathbf{SemiCat}$ is the forgetful functor.

\begin{definition}
An Expectation Composition category $\cb$ is a Marginal Likelihood Factorization Category over the measure $\mu: \bc(\Omega) \rightarrow [0,1]$ if
the following cospan in $\mathbf{SemiCat}$:
\begin{gather*}
U(\cb)
\xhookrightarrow{inc}
U(\df)
\xhookleftarrow{inc'}
\dfr
\end{gather*}
(where $inc$ and $inc'$ are respectively the inclusion maps of $U(\cb)$ and $\dfr$ into $U(\df)$) has a pullback
\begin{align*}
    U(\cb) \xleftarrow{h_l} \cbr \xrightarrow{h_r} \dfr
\end{align*}
that satisfies the following property. There exists:
\begin{itemize}
    \item A differentiable function with invertible derivative $er: \rl \times \rl \rightarrow \rl$ 
    \item For each $n \in \mathbb{N}$, a function $\alpha_n: (\Omega^n \rightarrow \rl) \rightarrow \rl$
    \item For each $n \in \mathbb{N}$, a non-negative function $\beta_n: (\Omega^n \rightarrow \rl) \rightarrow \rl$
\end{itemize}
such that for any
\begin{align*}
    x_p\in\rl^p,x_a\in\rl^a,j\leq b
\end{align*}
and arrow in the semicategory $\cbr$ whose image under:
\begin{gather*}
inc \circ h_l: \cbr \rightarrow U(\df)
\end{gather*}
has the signature $f: \Omega^n \times \rl^p \times \rl^a \rightarrow \rl^b$, we can write:
%
\begin{gather*}
    log \ \frac{d f(\_,x_p, x_{a})[j]_{*}\mu^n}{d\lambda}: \rl \rightarrow \rl
\end{gather*}
\begin{align*}
    log \frac{d f(\_,x_p, x_{a})[j]_{*}\mu^n}{d\lambda}(y) = \\
    %
    \alpha_{n}(f^{0}(\_,x_p, x_a)[j]) - \beta_{n}(f^{0}(\_,x_p, x_a)[j])er\left(
    E_{\mu^n}[f(\_,x_p, x_a)[j]], 
    y\right).
\end{align*}
We will refer to $er$ as a marginal error function of $\cb$.

\end{definition}

\begin{proposition}\label{NormalMarginalLikelihoodPreserve}
$\dfn$ is a Marginal Likelihood Factorization Category with a marginal error function $er(a,b) = (a-b)^2$.
\end{proposition}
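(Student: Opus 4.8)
The plan is to verify, clause by clause, that $\cb = \dfn$ meets the definition of a Marginal Likelihood Factorization Category. First, $\dfn$ is an Expectation Composition category by Proposition \ref{ExpectationClosureProp}, so the top-level hypothesis already holds. Next I would identify the pullback $\cbr$ of the cospan $U(\dfn) \hookrightarrow U(\df) \hookleftarrow \dfr$. Since both legs are faithful identity-on-objects inclusions of substructures of $\df$, the pullback in $\mathbf{SemiCat}$ is simply the semicategory on the objects $\rl^n$ whose arrows are exactly those $\df$-arrows lying in both $\dfn$ and $\dfr$, with $h_l, h_r$ the evident inclusions. The crucial consequence is that an arrow of $\cbr$ both produces multivariate normal random variables (by membership in $\dfn$) and admits a Radon--Nikodym derivative of its pushforward with respect to Lebesgue measure (by membership in $\dfr$); the latter forces the covariance of $f(\_, x_p, x_a)$ to be nondegenerate, so in particular each component variance is strictly positive.

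The heart of the argument is then a direct computation. By Proposition \ref{proposition:multivariate-normal-closure}, for $f$ in $\cbr$ with signature $f: \Omega^n \times \rl^p \times \rl^a \rightarrow \rl^b$ and any $x_p, x_a$, the random variable $f(\_, x_p, x_a): \Omega^n \rightarrow \rl^b$ is multivariate normal, so each component $f(\_, x_p, x_a)[j]$ is univariate normal with mean $m_j = E_{\mu^n}[f(\_, x_p, x_a)[j]]$ and variance $v_j > 0$. Its pushforward density is the Gaussian density, whose logarithm is
\begin{align*}
    log \frac{d f(\_,x_p, x_{a})[j]_{*}\mu^n}{d\lambda}(y) = -\frac{1}{2} log(2\pi v_j) - \frac{(y - m_j)^2}{2 v_j}.
\end{align*}
I would then observe that $v_j$ is determined by the centered variable $f^{0}(\_, x_p, x_a)[j] = f(\_, x_p, x_a)[j] - m_j$, namely $v_j = E_{\mu^n}[(f^{0}(\_, x_p, x_a)[j])^2]$, since that centered variable has mean zero.

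This immediately suggests the witnessing data. Take $er(a,b) = (a-b)^2$, and for a random variable $g: \Omega^n \rightarrow \rl$ define $\alpha_n(g) = -\tfrac{1}{2} log(2\pi E_{\mu^n}[g^2])$ and $\beta_n(g) = \tfrac{1}{2 E_{\mu^n}[g^2]}$. Then $\beta_n$ is non-negative as required, and substituting $g = f^{0}(\_, x_p, x_a)[j]$ gives $E_{\mu^n}[g^2] = v_j$, so
\begin{align*}
    \alpha_n(f^0[j]) - \beta_n(f^0[j])\, er(m_j, y) = -\frac{1}{2} log(2\pi v_j) - \frac{(m_j - y)^2}{2 v_j},
\end{align*}
which equals the displayed log-density because $(m_j - y)^2 = (y - m_j)^2$. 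Finally I would check that $er(a,b) = (a-b)^2$ is differentiable with invertible derivative in the sense required by Fong et al.'s construction: $\partial_a\, er(a,b) = 2(a-b)$ is, for each fixed $b$, an affine bijection in $a$.

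The main obstacle is not the Gaussian bookkeeping but the correct treatment of the pullback and of degeneracy: one must argue that intersecting $\dfn$ with $\dfr$ is exactly what guarantees $v_j > 0$, so that $\beta_n$ is finite and the density (hence its logarithm) genuinely exists. In particular $\dfn$ alone contains degenerate, variance-zero normals --- e.g. the identity and any deterministic affine map, whose pushforward is a Dirac measure with no density against $\lambda$ --- and it is precisely membership in $\dfr$ that excludes these and makes the factorization well defined (with $\alpha_n, \beta_n$ extended arbitrarily over the variance-zero locus that $\cbr$ never reaches).
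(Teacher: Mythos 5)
Your proof is correct and follows essentially the same route as the paper's: realize the pullback $\cbr$ as the intersection of $\dfn$ and $\dfr$ inside $\df$, invoke Proposition \ref{proposition:multivariate-normal-closure} for component-wise normality, and read off $\alpha_n$, $\beta_n$ and $er(a,b) = (a-b)^2$ from the Gaussian log-density (your $E_{\mu^n}[g^2]$ agrees with the paper's $s_n(g)^2$ on the centered variables where it is applied). Your explicit handling of the variance-zero degenerate case via membership in $\dfr$, and your check that $\partial_a\, er$ is invertible, are welcome refinements that the paper's proof leaves implicit.
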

\begin{proof}
To begin, consider the structure $\cbr$ that has the same objects as $\dfr$ and:
\begin{gather*}
    \cbr[\rl^a, \rl^b] = \dfn[\rl^a, \rl^b] \cap \dfr[\rl^a, \rl^b].
\end{gather*}
Since $\dfn$ and $\dfr$ are small this intersection is well-defined. Note also that if we have:
\begin{gather*}
    f_1 \in \dfn[\rl^a, \rl^b], f_1 \in \dfr[\rl^a, \rl^b] \\
    f_2 \in \dfn[\rl^b, \rl^c], f_2 \in \dfr[\rl^b, \rl^c]
\end{gather*}
Then since $\dfn$ and $\dfr$ are closed under composition it must be that:
\begin{gather*}
    f_2 \circ f_1 \in \dfn[\rl^a, \rl^c] \cap \dfr[\rl^a, \rl^c]
\end{gather*}
Therefore $\cbr$ is a semicategory.

Now note that there exist identity-on-objects and identity-on-morphisms inclusion semifunctors:
\begin{align*}
    & id_l: \cbr \hookrightarrow U(\dfn)
    \\
    & id_r: \cbr \hookrightarrow \dfr
\end{align*}
such that the following diagram commutes:
\begin{center}
\begin{tikzcd}[column sep=2in,row sep=2in]
\cbr \arrow[hook]{r}{id_r} \arrow[hook]{d}{id_l} & \dfr \arrow[hook]{d}{inc'} \\
U(\dfn) \arrow[hook]{r}{inc} & U(\df)
\end{tikzcd}
\end{center}

Now consider any other semicategory $\cb'$ equipped with
semifunctors:
\begin{gather*}
    l: \cb' \rightarrow U(\dfn)
    \qquad
    r: \cb' \rightarrow \dfr
\end{gather*}
such that the following diagram commutes:
\begin{center}
\begin{tikzcd}[column sep=2in,row sep=2in]
\cb' \arrow{r}{r} \arrow{d}{l} & \dfr \arrow[hook]{d}{inc'} \\
U(\dfn) \arrow[hook]{r}{inc} & U(\df)
\end{tikzcd}
\end{center}
Since $inc$ and $inc'$ are inclusion maps, $l$ and $r$ must act identically on objects and morphisms. Therefore, any object or morphism in the image of $l$ must also be in $\dfr$ and any object or morphism in the image of $r$ must also be in $U(\dfn)$. Therefore, any object or morphism in the image of either $l$ or $r$ must also be in $\cbr$. 

We can therefore define a semifunctor $h: \cb' \rightarrow \cbr$ that has the same action on objects and morphisms as $l$ and $r$. This implies that
\begin{gather*}
    id_l \circ h = l
    \qquad
    id_r \circ h = r.
\end{gather*}
Since $h$ must have the same actions on objects and morphisms as $l$ and $r$ to satisfy these equations it must be unique, and therefore $(\cbr, id_l, id_r)$ is the pullback of the diagram:
\begin{gather*}
    U(\dfn) \xhookrightarrow{inc} U(\df) \xhookleftarrow{inc'} \dfr
\end{gather*}
Next, consider some $f: \Omega^n \times \rl^p \times \rl^a \rightarrow \rl^b$ in $\cbr$, and note that for any $x_p \in \rl^p, x_a \in \rl^a, j \leq b$, the random variable $f(\_,x_p, x_{a})[j]: \Omega^^n \rightarrow \rl$ is univariate normal (Proposition \ref{proposition:multivariate-normal-closure}). For each $n \in \mathbb{N}$ we also define the standard deviation function $s_n: (\Omega^n \rightarrow \rl) \rightarrow \rl$ where for $g: \Omega^n \rightarrow \rl$:
\begin{gather*}
    s_n(g) = \sqrt{E_{\mu_n}[(g - E_{\mu_n}[g])^2]}.
\end{gather*}
%
Now for any $x_p \in \rl^p, x_a \in \rl^a, y\in\rl, j\leq b$ we can write:
\begin{gather*}
    log \ \frac{d f(\_,x_p, x_{a})[j]_{*}\mu^n}{d\lambda}: \rl \rightarrow \rl
\end{gather*}
\begin{align*}
    log \left( \frac{d f(\_,x_p, x_{a})[j]_{*}\mu^n}{d\lambda}(y) \right)= \\
    %
    log \left(\left( \frac{1}{s_n(f(\_,x_p, x_a)[j])\sqrt{2\pi}} \right)
    \exp\left(-\frac{\left(y - E_{\mu^n}[f(\_,x_p, x_a)[j]]\right)^2}{2s_n(f(\_,x_p, x_a)[j])^2}\right) \right)= \\
    %
    - \frac{log(2\pi s_n(f(\_,x_p, x_a)[j])^2)}{2} - \frac{1}{2 s_n(f(\_,x_p, x_a)[j])^2} \left(y - E_{\mu^n}[f(\_,x_p, x_a)[j]]\right)^2.
\end{align*}
Therefore:
\begin{align*}
    & \alpha_n(g) = - \frac{log(2\pi s_n(g)^2)}{2}
    \\
    & \beta_n(g) = \frac{1}{2 s_n(g)^2} 
    \\
    & er(a,b) = (a-b)^2.
\end{align*}
\end{proof}

\subsection{Backpropagation}
The arrows in a Marginal Likelihood Factorization Category $\cb$ are equipped with the structure that we need to derive both an optimization objective and a learning procedure. Therefore, for any Marginal Likelihood Factorization Category $\cb$ and choice of learning rate $\epsilon$ we can define a backpropagation functor into Fong et al.'s \cite{fong2019backprop} $\learn$ category.
\begin{definition}
%
Write $F_{er}$ for Fong et al.'s \cite{fong2019backprop} Backpropagation functor with learning rate $\epsilon$ under the marginal error function $er$ of $\cb$. We define the functor $E_{er}$ to map a parametric statistical model in $\cb$ to a learning algorithm:
\begin{gather*}
    E_{er}: \cb \rightarrow \learn \\
    E_{er} = F_{er} \circ \expectation
\end{gather*}
\end{definition}
Where $\expectation$ is defined in Proposition \ref{proposition:expectation-functor}. For example, $E_{er}$ sends parametric statistical models in $\dfn$ to learning algorithms that minimize the square error function with gradient descent. We can think of $E_{er}$ as a point estimation functor: it sends an arrow $f$ in $\cb$ to a learner whose inference function is formed from $f$'s expectation. The higher order moments of the pushforward distributions of the arrows in $\cb$ are then used to define the loss function $er$.

\section{Closing Thoughts on Categorical Stochastic Processes and Likelihood}

Consider once again a physical system that is composed of several components, each of which has some degree of aleatoric uncertainty. If we construct a neural network model for this system directly, we cannot characterize the interactions between the uncertainty in the different parts of the system. However, if we model the components of the system as stochastic processes and apply $\df$ composition, we can capture how the uncertainty of the component parts combine. For example, given estimates of the kind of uncertainty inherent to the photorecepters in the eye, edge-detecting neurons in primary visual cortex, and higher-order feature detectors in the later stages of visual cortex, we may be able to build a more realistic model of how these sources of uncertainty interact than the one that Eberhardt et al. \cite{eberhardt2016deep} use to assess how the visual cortex performs a rapid stimulus categorization task.

Once we build such a model, we can use $E_{er}$ to derive a Learner with a structure that incorporates this combined uncertainty. This functor will convert the model to a point estimator and bundle the combined uncertainty into a loss function.

One of the largest differences between this construction and those of Cho et al. \cite{cho2019disintegration} and Culbertson et al. \cite{culbertson2013bayesian} is the treatment of model updates in the face of new data. While these authors also describe categorical frameworks in which we can model how a new observation updates the parameters of a statistical model, they primarily study Bayesian algorithms in which the model parameters are represented with a probability distribution.

In contrast, our construction is inherently frequentist. While the backpropagation functor above aims to find an optimal parameter value given the data we have seen, it makes no assumptions about what that value may be. Although uncertainty motivates the objective that our parameter estimation procedure aims to optimize, the optimization algorithm does not use it directly. Therefore, a potential future direction for this work is to extend the category $\df$ of deterministic and frequentist models to handle generative algorithms that model uncertainty in the input vector and Bayesian algorithms that model uncertainty in the parameter vector.

Furthermore, our current definition of Marginal Likelihood Factorization Categories may be overly restrictive.
For example, our definition specifies that each category is characterized by a single marginal error function $er$. This makes it challenging to build a theory for how we could compose Marginal Likelihood Factorization Categories with different marginal error functions. Another potential future direction would be to relax the restrictions on these categories or prove that they are necessary.

\bibliography{main.bib}
\bibliographystyle{plainnat}
\clearpage





  


\end{document}